\newcommand\scalemath[2]{\scalebox{#1}{\mbox{\ensuremath{\displaystyle #2}}}}
\newcommand{\ifonecol}[2]{%
  \if@twocolumn
    #2%
  \else
    #1%
  \fi
}
\theoremstyle{plain}
\newtheorem{theorem}{Theorem}[section]
\newtheorem{proposition}[theorem]{Proposition}
\newtheorem{lemma}[theorem]{Lemma}
\theoremstyle{definition}
\newtheorem{assumption}[theorem]{Assumption}
\theoremstyle{remark}
\newcommand{\appref}[1]{\hyperref[#1]{Appendix~\ref*{#1}}}
\newcommand{\autorefp}[1]{(\autoref{#1})}
\newcommand{\autorefseq}[2]{\autoref{#1}--\ref{#2}}
\newacronym{gnn}{GNN}{graph neural network}
\newacronym{stgnn}{STGNN}{spatiotemporal graph neural network}
\newacronym{mlp}{MLP}{multilayer perceptron}
\newacronym{cnn}{CNN}{convolutional neural network}
\newacronym{tcn}{TCN}{temporal convolutional network}
\newacronym{mptcn}{MPTCN}{message-passing temporal convolutional network}
\newacronym{rnn}{RNN}{recurrent neural network}
\newacronym{gru}{GRU}{Gated Recurrent Unit}
\newacronym{lstm}{LSTM}{Long Short-Term Memory}
\newacronym{mprnn}{MPRNN}{message-passing recurrent neural network}
\newacronym{mpgru}{MPGRU}{message-passing GRU}
\newacronym{mp}{MP}{message passing}
\newacronym{mpnn}{MPNN}{message-passing neural network}
\newacronym{tmp}{TMP}{temporal message-passing}
\newacronym{smp}{SMP}{spatial message-passing}
\newacronym{stmp}{STMP}{spatiotemporal message-passing}
\newacronym{mae}{MAE}{mean absolute error}
\newacronym{mse}{MSE}{mean squared error}
\newacronym{sn}{SN}{sensor network}
\newacronym{iot}{IoT}{Internet of Things}
\newacronym{mso}{MSO}{Multiple Superimposed Oscillators}
\newacronym{tts}{TTS}{\emph{time-then-space}}
\newacronym{tas}{T\&S}{\emph{time-and-space}}
\newacronym{mcar}{MCAR}{missing completely at random}
\newacronym{mar}{MAR}{missing at random}
\newacronym{mnar}{MNAR}{missing not at random}
\newglossaryentry{tmpd}{name=\textsc{TMP-D},description=}
\newglossaryentry{smpd}{name=\textsc{SMP-D},description=}
\newacronym{model}{???}{???}
\newglossaryentry{ttsimp}{name=TTS-IMP,description=}
\newglossaryentry{ttsamp}{name=TTS-AMP,description=}
\newglossaryentry{tasimp}{name=T\&S-IMP,description=}
\newglossaryentry{tasamp}{name=T\&S-AMP,description=}
\newglossaryentry{fcrnn}{name=FC-RNN,description=}
\newglossaryentry{fcgru}{name=FCGRU,description=}
\newglossaryentry{grud}{name=GRU-D,description=}
\newglossaryentry{fcgrud}{name=FCGRU-D,description=}
\newglossaryentry{grui}{name=GRU-I,description=}
\newglossaryentry{grin}{name=GRIN-P,description=}
\newglossaryentry{dcrnn}{name=DCRNN,description=}
\newglossaryentry{agcrn}{name=AGCRN,description=}
\newacronym{gwnet}{GWNet}{Graph WaveNet}
\newglossaryentry{la}{name=METR-LA,description=}
\newglossaryentry{bay}{name=PEMS-BAY,description=}
\newglossaryentry{air}{name=AQI,description=}
\newglossaryentry{pvus}{name=PV-US,description=}
\newglossaryentry{engrad}{name=EngRAD,description=}
\newglossaryentry{msods}{name=MSO,description=}
\newglossaryentry{ringtrans}{name=\textsf{TemporalRingTransfer},description=}
\newglossaryentry{memtest}{name=\textsf{MemTest-ST},description=}
\newglossaryentry{copyfirst}{name=\textsc{CopyFirst},description=}
\newglossaryentry{copylast}{name=\textsc{CopyLast},description=}
\newglossaryentry{rocketman}{name=\textsc{RocketMan},description=}
\newglossaryentry{point}{name=Point,description=}
\newglossaryentry{block}{name=Block--T,description=}
\newglossaryentry{block_prop}{name=Block--ST,description=}
\definecolor{drawiopurple}{HTML}{9673A6}
\definecolor{drawiogreen}{HTML}{82B366}
\definecolor{drawiored}{HTML}{B85450}
\definecolor{drawioblue}{HTML}{6C8EBF}
\definecolor{drawioorange}{HTML}{D79B00}
\definecolor{wckdgreen}{HTML}{2A9D8F}
\definecolor{wckdpurple}{HTML}{DF3079}
\tikzset{
    vertex/.style={
        circle,
        draw=#1, 
        fill=#1!30, 
        thick,
        minimum size=6pt,
        inner sep=0,
        text centered,
    },
    vertex/.default=black,
    edge/.style={
        thick,
        draw=#1, 
    },
    edge/.default=black,
    diedge/.style={
        thick,
        -latex,
        draw=#1, 
    },
    diedge/.default=black,
}
\def\1{\bm{1}}
\def\rmI{{\mathbf{I}}}
\def\rmN{{\mathbf{N}}}
\def\rmR{{\mathbf{R}}}
\def\rmS{{\mathbf{S}}}
\def\vzero{{\bm{0}}}
\def\vone{{\bm{1}}}
\def\vtheta{{\bm{\theta}}}
\def\ve{{\bm{e}}}
\def\vh{{\bm{h}}}
\def\vx{{\bm{x}}}
\def\vy{{\bm{y}}}
\def\vz{{\bm{z}}}
\def\mA{{\bm{A}}}
\def\mH{{\bm{H}}}
\def\mI{{\bm{I}}}
\def\mT{{\bm{T}}}
\def\mW{{\bm{W}}}
\def\mX{{\bm{X}}}
\def\mY{{\bm{Y}}}
\def\mZ{{\bm{Z}}}
\DeclareMathAlphabet{\mathsfit}{\encodingdefault}{\sfdefault}{m}{sl}
\SetMathAlphabet{\mathsfit}{bold}{\encodingdefault}{\sfdefault}{bx}{n}
\def\gC{{\mathcal{C}}}
\def\gE{{\mathcal{E}}}
\def\gN{{\mathcal{N}}}
\def\gV{{\mathcal{V}}}
\def\sN{{\mathbb{N}}}
\def\sR{{\mathbb{R}}}
\newcommand{\norm}[1]{\left\lVert#1\right\rVert}
\DeclareMathOperator*{\argmin}{arg\,min}
\DeclareMathOperator*{\aggr}{\textsc{Aggr}}
\DeclareMathOperator{\diag}{\mathrm{diag}}
\DeclareMathOperator{\mpl}{\mathsf{MP}}
\DeclareMathOperator{\tmpl}{\mathsf{TMP}}
\DeclareMathOperator{\stmpl}{\mathsf{STMP}}
\DeclareMathOperator{\encl}{\mathsf{ENCODER}}
\DeclareMathOperator{\rol}{\mathsf{READOUT}}
\DeclareMathOperator{\tcnl}{\mathsf{TC}}
\DeclareMathSymbol{\shortminus}{\mathbin}{AMSa}{"39}
\newcommand{\jac}[2][]{%
  \ifx\relax#1\relax 
    \nabla^{#2}
  \else
    \def\@tempa{#2}%
    \ifx\@tempa\@empty
      \nabla_{#1}
    \else
      \nabla_{#1}^{#2}
    \fi
  \fi%
}
\newcommand{\gso}{\tilde{a}}
\newcommand{\Gso}{\widetilde{\mA}}
\newcommand{\bigo}{\mathcal{O}}
\title{Over-squashing in Spatiotemporal~Graph~Neural~Networks}
\author{%
  Ivan Marisca\textsuperscript{1}\thanks{Work done while at University of Oxford. Correspondance to \texttt{ivan.marisca@usi.ch}}
  \And
  Jacob Bamberger\textsuperscript{2}
  \And
  Cesare Alippi\textsuperscript{1,3}
  \And
  Michael M.\ Bronstein\textsuperscript{2,4}
  \AND\\[-2em]
  \textsuperscript{1}Università della Svizzera italiana, IDSIA, Lugano, Switzerland.\\[.2em]
  \textsuperscript{2}University of Oxford, Oxford, UK.\\[.2em]
  \textsuperscript{3}Politecnico di Milano, Milan, Italy.\\[.2em]
  \textsuperscript{4}AITHYRA, Vienna, Austria.
}
\begin{document}

\maketitle

\begin{abstract}
    Graph Neural Networks (GNNs) have achieved remarkable success across various domains. However, recent theoretical advances have identified fundamental limitations in their information propagation capabilities, such as over-squashing, where distant nodes fail to effectively exchange information. While extensively studied in static contexts, this issue remains unexplored in Spatiotemporal GNNs (STGNNs), which process sequences associated with graph nodes. Nonetheless, the temporal dimension amplifies this challenge by increasing the information that must be propagated. In this work, we formalize the spatiotemporal over-squashing problem and demonstrate its distinct characteristics compared to the static case. Our analysis reveals that, counterintuitively, convolutional STGNNs favor information propagation from points temporally distant rather than close in time. Moreover, we prove that architectures that follow either time-and-space or time-then-space processing paradigms are equally affected by this phenomenon, providing theoretical justification for computationally efficient implementations. We validate our findings on synthetic and real-world datasets, providing deeper insights into their operational dynamics and principled guidance for more effective designs.
\end{abstract}

\section{Introduction}
\label{sec:intro}
Graph deep learning~\citep{battaglia2018relational, bronstein2021geometric} has become a powerful paradigm for learning from relational data, particularly through \glspl{gnn}~\citep{gori2005new, scarselli2008graph, defferrard2016convolutional}. These models process attributed graphs, where nodes represent entities and edges encode their relationships, and have shown strong performance in applications such as drug discovery~\citep{stokes2020deep, liu2023deep}, material synthesis~\citep{merchant2023scaling, szymanski2023autonomous}, and social media analysis~\citep{monti2019fake}.
Beyond static graphs, \glspl{gnn} have also been extended to dynamic settings where graph data evolves over time~\citep{kazemi2020representation, longa2023graph, jin2024survey}. When data can be represented as synchronous sequences associated with nodes of a graph, a common approach is to pair \glspl{gnn} with sequence models like \glspl{rnn}~\citep{hochreiter1997long, cho2014learning} or \glspl{tcn}~\citep{oord2016wavenet, borovykh2017conditional}, leading to the class of \glspl{stgnn}~\citep{seo2018structured, li2018diffusion, cini2023graph}. These architectures have been successfully applied to real-world problems ranging from traffic forecasting~\citep{li2018diffusion, wu2019graph} to energy systems~\citep{khodayar2018spatio, eandi2022spatio} and epidemiology~\citep{kapoor2020examining}.
While the strong empirical performance of \glspl{stgnn} has largely driven their development, much less attention has been devoted to understanding their theoretical capabilities and limitations. This contrasts with the literature on static \glspl{gnn}, which includes extensive analyses on expressivity~\citep{xu2019powerful, bianchi2023expressive} and training dynamics, including over-smoothing~\citep{cai2020note, di_giovanni2023understanding, rusch2023survey} and over-squashing~\citep{alon2021bottleneck, topping2022understanding, di_giovanni2023over}.
In particular, over-squashing -- where information from distant nodes is compressed through graph bottlenecks -- has emerged as a central limitation of message-passing architectures. However, these insights do not directly extend to \glspl{stgnn}, in which information is propagated not only across the \textit{graph} but also along the \textit{time} axis. Understanding how over-squashing occurs in this joint spatiotemporal setting remains an open question.

The temporal axis represents an additional challenge, typically handled by imposing a locality assumption: adjacent time steps are presumed to be more correlated than distant ones. Under this assumption, the sequence behaves like a second, directed graph whose edges encode temporal order. As we stack local filters -- or simply process longer sequences -- increasing volumes of information are stored into fixed-width embeddings, a limitation already noted for recurrent architectures~\citep{bengio1994learning, pascanu2013difficulty}. When this temporal compression meets the constraints of spatial message passing, it creates a compound bottleneck we call \emph{spatiotemporal over-squashing}: messages must cross rapidly expanding receptive fields in both space and time, exceeding the capacity of intermediate representations. \autoref{fig:intrographs} visualizes this dual-graph structure, where information must propagate across both the spatial dimension (blue edges) and temporal dimension (red edges), with each path potentially contributing to the over-squashing phenomenon.

Two dominant architectural strategies have emerged to propagate information both through time and space. \Gls{tts} models first compress each node’s sequence into a vector representation and only afterward propagate these embeddings across the graph, whereas \gls{tas} models interleave temporal and spatial processing so that information flows across both axes at every layer~\citep{gao2022equivalence, cini2023graph}. While the choice between these paradigms has been driven mainly by empirical accuracy and computational costs, we argue that a principled analysis of these trade-offs is essential for guiding the design of future \glspl{stgnn}.

\begin{figure}[t]
    \centering
    \begin{tikzpicture}
    
        \tikzstyle{group}=[draw=black!80, thick, circle, dotted, inner sep=.25mm]
    
        \node[vertex=drawioblue] (v1) at (-.25, .4) {};
        \node[vertex=drawioblue] (v2) at (0, 1) {};
        \node[vertex=drawioblue] (v3) at (.25, .625) {};
        \node[vertex=drawioblue] (v4) at (0, 0) {};
        \node[vertex=drawioblue] (v5) at (0, -.5) {};
        \node[vertex=drawioblue] (v6) at (0, -1) {};
        \node(v1f) at (-.5, .85) {};
        \node(v3f) at (.4, .975) {};

        \node[group, fit=(v5), label={[label distance=0mm]0:$v$}] {};
        
        \draw [edge=drawioblue] (v1)--(v2);
        \draw [edge=drawioblue] (v2)--(v3);
        \draw [edge=drawioblue] (v3)--(v4);
        \draw [edge=drawioblue] (v4)--(v1);
        \draw [edge=drawioblue] (v4)--(v5);
        \draw [edge=drawioblue] (v5)--(v6);
        \draw [edge=drawioblue, densely dashed] (v1)--(v1f);
        \draw [edge=drawioblue, densely dashed] (v3)--(v3f);

    
        \node[anchor=south] at (0, 1.05) {Spatial graph};
        
        \node[right=.5 of v4] (times) {\large$\times$};
        
        \node[right=0 of times] (t0) {};
        \node[vertex=drawiored, right = .7 of t0] (t1) {};
        \node[vertex=drawiored, right = .7 of t1] (t2) {};
        \node[vertex=drawiored, right = .7 of t2] (t3) {};

        \node[group, fit=(t3), label={[label distance=0mm]0:$t$}] {};
        
        \draw [diedge=drawiored, densely dashed] (t0)--(t1);
        \draw [diedge=drawiored] (t1)--(t2);
        \draw [diedge=drawiored] (t2)--(t3);

        \node[fit=(t1)(t2)(t3), label={[label distance=0mm]90:Temporal graph}](t_graph) {};
        
        \node[right=.5 of t3] (equal) {\large$=$};
        
        \def \firstshiftx{5.5cm}
        \node[right=\firstshiftx of v1] (v1_00) {};
        \node[right=\firstshiftx of v2] (v2_00) {};
        \node[right=\firstshiftx of v3] (v3_00) {};
        \node[right=\firstshiftx of v4] (v4_00) {};
        \node[right=\firstshiftx of v5] (v5_00) {};
        \node[right=\firstshiftx of v6] (v6_00) {};
        \node[right=\firstshiftx of v1f] (v1f_00) {};
        \node[right=\firstshiftx of v3f] (v3f_00) {};
        
        \def \gshiftx{.85cm}
        \node[vertex=drawiopurple, right=\gshiftx of v1_00] (v1_1) {};
        \node[vertex=drawiopurple, right=\gshiftx of v2_00] (v2_1) {};
        \node[vertex=drawiopurple, right=\gshiftx of v3_00] (v3_1) {};
        \node[vertex=drawiopurple, right=\gshiftx of v4_00] (v4_1) {};
        \node[vertex=drawiopurple, right=\gshiftx of v5_00] (v5_1) {};
        \node[vertex=drawiopurple, right=\gshiftx of v6_00] (v6_1) {};
        \node[right=\gshiftx of v1f_00] (v1f_1) {};
        \node[right=\gshiftx of v3f_00] (v3f_1) {};

        \draw [diedge=drawiored, densely dashed, opacity=.5] (v1_00)--(v1_1);
        \draw [diedge=drawiored, densely dashed, opacity=.5] (v2_00)--(v2_1);
        \draw [diedge=drawiored, densely dashed, opacity=.5] (v3_00)--(v3_1);
        \draw [diedge=drawiored, densely dashed, opacity=.5] (v4_00)--(v4_1);
        \draw [diedge=drawiored, densely dashed, opacity=.5] (v5_00)--(v5_1);
        \draw [diedge=drawiored, densely dashed, opacity=.5] (v6_00)--(v6_1);

        \draw [edge=drawioblue] (v1_1)--(v2_1);
        \draw [edge=drawioblue] (v2_1)--(v3_1);
        \draw [edge=drawioblue] (v3_1)--(v4_1);
        \draw [edge=drawioblue] (v4_1)--(v1_1);
        \draw [edge=drawioblue] (v4_1)--(v5_1);
        \draw [edge=drawioblue] (v5_1)--(v6_1);
        \draw [edge=drawioblue, densely dashed, opacity=0.75] (v1_1)--(v1f_1);
        \draw [edge=drawioblue, densely dashed, opacity=0.75] (v3_1)--(v3f_1);
        
        \def \gshiftx{.85cm}
        \node[vertex=drawiopurple, right=\gshiftx of v1_1] (v1_2) {};
        \node[vertex=drawiopurple, right=\gshiftx of v2_1] (v2_2) {};
        \node[vertex=drawiopurple, right=\gshiftx of v3_1] (v3_2) {};
        \node[vertex=drawiopurple, right=\gshiftx of v4_1] (v4_2) {};
        \node[vertex=drawiopurple, right=\gshiftx of v5_1] (v5_2) {};
        \node[vertex=drawiopurple, right=\gshiftx of v6_1] (v6_2) {};
        \node[right=\gshiftx of v1f_1] (v1f_2) {};
        \node[right=\gshiftx of v3f_1] (v3f_2) {};
        
        \draw [diedge=drawiored] (v1_1)--(v1_2);
        \draw [diedge=drawiored] (v2_1)--(v2_2);
        \draw [diedge=drawiored] (v3_1)--(v3_2);
        \draw [diedge=drawiored] (v4_1)--(v4_2);
        \draw [diedge=drawiored] (v5_1)--(v5_2);
        \draw [diedge=drawiored] (v6_1)--(v6_2);
        

        \draw [edge=drawioblue] (v1_2)--(v2_2);
        \draw [edge=drawioblue] (v2_2)--(v3_2);
        \draw [edge=drawioblue] (v3_2)--(v4_2);
        \draw [edge=drawioblue] (v4_2)--(v1_2);
        \draw [edge=drawioblue] (v4_2)--(v5_2);
        \draw [edge=drawioblue] (v5_2)--(v6_2);
        \draw [edge=drawioblue, densely dashed, opacity=0.75] (v1_2)--(v1f_2);
        \draw [edge=drawioblue, densely dashed, opacity=0.75] (v3_2)--(v3f_2);
        
        \node[vertex=drawiopurple, right=\gshiftx of v1_2] (v1_3) {};
        \node[vertex=drawiopurple, right=\gshiftx of v2_2] (v2_3) {};
        \node[vertex=drawiopurple, right=\gshiftx of v3_2] (v3_3) {};
        \node[vertex=drawiopurple, right=\gshiftx of v4_2] (v4_3) {};
        \node[vertex=drawiopurple, right=\gshiftx of v5_2] (v5_3) {};
        \node[vertex=drawiopurple, right=\gshiftx of v6_2] (v6_3) {};
        \node[right=\gshiftx of v1f_2] (v1f_3) {};
        \node[right=\gshiftx of v3f_2] (v3f_3) {};

        \node[group, fit=(v5_3), label={[label distance=0mm]0:$(v, t)$}] {};
        
        \draw [diedge=drawiored] (v1_2)--(v1_3);
        \draw [diedge=drawiored] (v2_2)--(v2_3);
        \draw [diedge=drawiored] (v3_2)--(v3_3);
        \draw [diedge=drawiored] (v4_2)--(v4_3);
        \draw [diedge=drawiored] (v5_2)--(v5_3);
        \draw [diedge=drawiored] (v6_2)--(v6_3);
        

        \draw [edge=drawioblue] (v1_3)--(v2_3);
        \draw [edge=drawioblue] (v2_3)--(v3_3);
        \draw [edge=drawioblue] (v3_3)--(v4_3);
        \draw [edge=drawioblue] (v4_3)--(v1_3);
        \draw [edge=drawioblue] (v4_3)--(v5_3);
        \draw [edge=drawioblue] (v5_3)--(v6_3);
        \draw [edge=drawioblue, densely dashed, opacity=0.75] (v1_3)--(v1f_3);
        \draw [edge=drawioblue, densely dashed, opacity=0.75] (v3_3)--(v3f_3);

        \node[anchor=south, above=0 of v2_2] {Spatiotemporal graph};
        
    \end{tikzpicture}
    \caption{Example of spatiotemporal topology governing information propagation in \acrshortpl{stgnn}. The increasing receptive fields of graph-based and sequence-processing architectures compound, as shown in the Cartesian product of spatial and temporal graphs on the right.}
    \label{fig:intrographs}
    \vskip -0.1in
\end{figure}

In this work, we investigate how the interplay between temporal and spatial processing in \glspl{stgnn} shapes learned representations, and how this process is limited by spatiotemporal over-squashing. We analyze information propagation patterns in existing \glspl{stgnn} designs by tracing the sensitivity of each embedding to the input features contained at neighboring nodes and time steps. Specifically, our study focuses on convolutional \glspl{stgnn}, whose temporal component is implemented through a shift operator that exchanges messages between adjacent time steps -- the time-domain analogue of message-passing on graphs~\citep{bai2018empirical}.
In summary, our contributions are the following:

\begin{enumerate}[leftmargin=1.5em, topsep=0pt]
    \item We formally characterize spatiotemporal over-squashing and show its fundamental differences from the static case. The temporal dimension introduces an additional axis for information flow, potentially amplifying the compression effects observed in static graphs. (\autoref{sec:stgnns})
    
    \item We prove both theoretically and empirically that architectures leveraging causal convolutions are, counterintuitively, more sensitive to information far apart in time, and we outline architectural modifications that mitigate this imbalance when required by the task. (\autoref{sec:ovs_tcn})

    \item We demonstrate that spatiotemporal over-squashing affects \acrshort{tas} and \acrshort{tts} paradigms to the same degree. Thus, the computational benefits of \acrshort{tts} models come at no extra cost in terms of information bottlenecks, providing theoretical support for scalable designs. (\autoref{sec:ovs_mptcn})
\end{enumerate}
All theoretical findings are supported by empirical results on both synthetic tasks specifically designed to highlight spatiotemporal bottlenecks, and real-world benchmarks, demonstrating that our insights translate to practical improvements. To our knowledge, no previous work addressed the over-squashing phenomenon in \glspl{stgnn}, despite its potential impact on spatiotemporal modeling.
Our work fills this gap by providing a theoretical framework for understanding information propagation in \glspl{stgnn}, with direct implications for model design and optimization.

\section{Background}
\label{sec:background}

\paragraph{Problem setting}
We denote by $\gV$ the set of $N$ synchronous and regularly-sampled time series, with $\vx_t^v \in \sR^{d_x}$ being the $d_x$-dimensional observation at time step $t$ associated with time series $v \in \gV$. The sequence $\vx^v_{t \shortminus T:t} \in \sR^{T \times d_x}$ indicates the node observations in the interval $(t-T, t]$, $\mX_{t} \in \sR^{N \times d_x}$ the matrix of all observations in $\gV$ at time $t$, so that $\left( \mX_t \right)_v=\vx^v_t$ is the $v$-th entry of $\mX_t$. When referring to a generic node or time step, we omit the indices ${}\cdot{}^v$ and ${}\cdot{}_t$ if not required.
We express temporal dependencies across observations within $\vx_{t \shortminus T:t}$ as a directed path graph $\mT \in \{0,1\}^{T \times T}$, named \textit{temporal graph}, where $(\mT)_{ij}$, i.e., the edge from time step $t - i$ to $t - j$, is $1$ only if $i-j=1$; $\mT$ acts as the \textit{backward shift operator}~\citep{box2015time}.
We assume the existence of relationships across time series, describing dependencies or correlations between the associated observations. We express them as edges in a \textit{spatial graph} with (weighted) adjacency matrix $\mA \in \sR_{\scalemath{.65}{\scalemath{.85}{\geq} 0}}^{N \times N}$, where $a^{uv} = \left( \mA \right)_{uv}$ is nonzero only if there exists an edge connecting node $u$ to $v$.  We use $\Gso$ to indicate a \textit{graph shift operator}, i.e.,  an $N \times N$ real matrix with $\gso^{uv} \neq 0$ if and only if $a^{uv} \neq 0$~\citep{sandryhaila2013discrete}.

We focus on node-level tasks and, given a window of $T$ observations $\mX_{t \shortminus T:t}$ and target label $\mY_{t}$, we consider families of (parametric) models $f_{\vtheta}$ conditioned on the structural dependencies such that
\begin{equation}\label{eq:task}
    \hat{\vy}_{t}^v = \left(f_{\vtheta}\left(\mX_{t \shortminus T:t}, \mA, \mT \right)\right)_v , \quad \forall\ v \in \gV,
\end{equation}
where $\vtheta$ is the set of learnable parameters and $\hat{\vy}_{t}^v$ is the estimate for $\vy_{t}^v$. For classification tasks, $\vy_{t}^v$ encodes the node label, while for prediction, the label is a sequence of $k$ future observations $\vx^v_{t:t+k}$.
Parameters are optimized using a task-dependent loss function, e.g., the \gls{mse}.

\paragraph{Spatiotemporal message passing}
\glspl{stgnn} are architectures specifically designed to process graph-structured data whose node features evolve over discrete time steps~\citep{cini2023graph, jin2024survey}. These models leverage \glspl{gnn} to capture spatial dependencies while employing sequence-processing operators to model temporal dynamics. Among the different \gls{gnn} variants, the primary deep learning approach for relational data are \glspl{mpnn}~\citep{gilmer2017neural}, which operate by iteratively updating each node's representation through aggregation of information from neighboring nodes~\citep{scarselli2008graph, defferrard2016convolutional, bronstein2021geometric}. In an \gls{mpnn}, node representations $\vh^{v(l)} \in \sR^{d}$ at the $l$-th layer are computed through $\mpl^{(l)}$ as
\begin{equation}
    \label{eq:mp}
    \vh^{v(l)} = \left(\mpl^{(l)}\left(\mH^{(l \shortminus 1)}, \Gso \right) \right)_v = \gamma^{(l)}\left(\vh^{v(l \shortminus 1)}, \aggr_{u \in \mathcal{N}(v)} \left\{ \phi^{(l)}\left(\vh^{v(l \shortminus 1)}, \vh^{u(l \shortminus 1)}, \gso^{uv}\right) \right\} \right)
\end{equation}
where $\gamma^{(l)}$ and $\phi^{(l)}$ are differentiable \textit{update} and \textit{message} functions, respectively, $\aggr\{\cdot{}\}$ is a permutation invariant \textit{aggregation} function over the set of messages, and $\mathcal{N}(v)$ is the set of incoming neighbors of $v$.
Borrowing this terminology, in the following we use the term \textit{temporal message-passing}~\citep{marisca2024graph} for any function $\tmpl^{(l)}$ that computes each $i$-th representation $\vh_{t \shortminus i}^{(l)}$ from the sequence $\vh_{t \shortminus T:t}^{(l \shortminus 1)}$ by conditioning on the temporal dependencies defined by $\mT$, i.e.,
\begin{equation}
    \label{eq:tmp}
    \vh^{(l)}_{t \shortminus T:t} = \tmpl^{(l)}\left(\vh_{t \shortminus T:t}^{(l \shortminus 1)}, \mT \right) .
\end{equation}
Examples of this function class include \glspl{rnn} and \glspl{tcn}. While the $\mpl$ and $\tmpl$ operators described previously are constrained to processing along a single dimension, the \textit{spatiotemporal message-passing} layer $\stmpl$ extends this capability to operate simultaneously across both spatial and temporal dimensions~\citep{cini2023taming}. This allows the model to condition its output on both the graph and backward shift-operators $\Gso$ and $\mT$, respectively:
\begin{equation}
    \label{eq:stmpl}
    \vh_{t \shortminus T:t}^{v(l)} = \left(\stmpl^{(l)}\left(\mH_{t \shortminus T : t}^{(l \shortminus 1)}, \Gso, \mT \right)\right)_v .
\end{equation}

\paragraph{Over-squashing in \glspl{gnn}}
This term describes the compression of exponentially increasing information throughout the layers of a \gls{gnn}~\citep{alon2021bottleneck}, which particularly hinders long-range interactions~\citep{bamberger2025measuring}.
A common way to assess over-squashing is by means of a sensitivity analysis through the Jacobian
\begin{equation}
    \label{eq:gnn_jac}
    \jac{u} \vh^{v(L)} := \frac{\partial \vh^{v(L)}}{\partial \vh^{u(0)}} \in \sR^{d \times d} ,
\end{equation}
whose spectral norm $\norm{\jac{u} \vh^{v(L)}}$ acts as a proxy to measure how much initial information at node $u$ can influence the representation computed at node $v$ after $L$ \gls{gnn} processing layers~\citep{topping2022understanding, di_giovanni2023over}. 
For a broad class of \glspl{mpnn}, \citet{di_giovanni2024how} obtained the following bound, which isolates the two contributions of architecture and graph structure to over-squashing.
\begin{restatable}[from \citet{di_giovanni2024how}]{fdgtheorem}{mpnnspec}
    \label{thm:mpnnspec}
    Consider an \gls{mpnn} with $L$ layers, with $c_{\xi}$ being the Lipschitz constant of the update function after activation $\xi$, and $\theta_{\mathsf{m}}$ and $\theta_{\mathsf{u}}$ being the maximal norms over all weight matrices in the message and update functions, respectively.
    For $v, u\in V$ we have:
    \[
        \norm{\jac{u} \vh^{v(L)}} \leq 
        \underbrace{(c_{\xi} \theta_{\mathsf{m}})^L\vphantom{\rmS^L_{\xi}}}_{\scalebox{.7}{$\mathrm{model}$}}\underbrace{\left(\rmS^L\right)_{uv}}_{\scalebox{.7}{$\mathrm{topology}$}} ,
    \]
    where
    \(\smash{
        \rmS  := \frac{\theta_{\mathsf{u}}}{\theta_{\mathsf{m}}}\mI + c_{1}\diag \big(\Gso^{\top} \mathbf{1}\big) + c_{2}\Gso \in \mathbb{R}^{N\times N},
    }\) is the message-passing matrix such that the Jacobian of the message function $\phi^{(l)}$ w.r.t.\ the target ($v$) and neighbor ($u$) node features has bounded norms $c_{1}$ and $c_{2}$, respectively.
\end{restatable}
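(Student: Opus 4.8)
The plan is to recover the bound by an induction on the number of layers, reducing everything to a single clean per-layer estimate. Concretely, I would first establish the \emph{one-layer lemma}
$\norm{\partial\vh^{v(l)}/\partial\vh^{w(l-1)}} \le c_{\xi}\,\theta_{\mathsf{m}}\,(\rmS)_{wv}$
for every layer index $l$ and every ordered pair $(w,v)$, and then compose these blocks across the $L$ layers via the chain rule; the matrix $\rmS$ is designed precisely so that this composition telescopes into $\rmS^{L}$.

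For the one-layer lemma I would differentiate the update rule in \autoref{eq:mp}. The feature $\vh^{w(l-1)}$ can reach $\vh^{v(l)}$ only if $w=v$ or $w\in\mathcal{N}(v)$, and it does so through two kinds of routes: (i) the \emph{ego route}, active when $w=v$, where $\vh^{v(l-1)}$ enters both as the first argument of $\gamma^{(l)}$ and, inside the aggregation, as the first argument of every message $\phi^{(l)}(\vh^{v(l-1)},\vh^{u(l-1)},\gso^{uv})$, $u\in\mathcal{N}(v)$; and (ii) the \emph{neighbour route}, active when $w\in\mathcal{N}(v)$, where $\vh^{w(l-1)}$ enters as the second argument of the single message $\phi^{(l)}(\vh^{v(l-1)},\vh^{w(l-1)},\gso^{wv})$ and then passes through $\partial_{2}\gamma^{(l)}$. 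Applying submultiplicativity of the spectral norm, the triangle inequality over the (sum/mean) aggregation, the post-activation Lipschitz bound $c_{\xi}$ on $\gamma^{(l)}$, the weight-norm bounds $\theta_{\mathsf{u}},\theta_{\mathsf{m}}$, and the message-Jacobian bounds $c_{1},c_{2}$, the ego route contributes at most $c_{\xi}(\theta_{\mathsf{u}} + c_{1}\theta_{\mathsf{m}}\sum_{u\in\mathcal{N}(v)}\gso^{uv})$ to the $(v,v)$ block and the neighbour route at most $c_{\xi}c_{2}\theta_{\mathsf{m}}\gso^{wv}$ to the $(w,v)$ block. Recognising $\sum_{u\in\mathcal{N}(v)}\gso^{uv} = (\Gso^{\top}\mathbf{1})_{v}$ and pulling out the common factor $c_{\xi}\theta_{\mathsf{m}}$, the remaining matrix is exactly $\tfrac{\theta_{\mathsf{u}}}{\theta_{\mathsf{m}}}\mI + c_{1}\diag(\Gso^{\top}\mathbf{1}) + c_{2}\Gso = \rmS$, which proves the lemma (after taking all constants to be the maxima over the $L$ layers).

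For the composition step, the chain rule gives
$\jac{u}\vh^{v(L)} = \partial\vh^{v(L)}/\partial\vh^{u(0)} = \sum_{w_{1},\dots,w_{L-1}}\prod_{l=1}^{L}\partial\vh^{w_{l}(l)}/\partial\vh^{w_{l-1}(l-1)}$
with $w_{0}=u$ and $w_{L}=v$. Taking norms, using submultiplicativity, and inserting the one-layer lemma factor by factor, the right-hand side is bounded by $(c_{\xi}\theta_{\mathsf{m}})^{L}\sum_{w_{1},\dots,w_{L-1}}(\rmS)_{uw_{1}}(\rmS)_{w_{1}w_{2}}\cdots(\rmS)_{w_{L-1}v}$; since the entries of $\rmS$ are nonnegative this sum over walks is precisely $(\rmS^{L})_{uv}$, giving the stated bound. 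Equivalently one can bypass the walk expansion and prove $\norm{\jac{w}\vh^{v(\ell)}}\le(c_{\xi}\theta_{\mathsf{m}})^{\ell}(\rmS^{\ell})_{wv}$ by induction on $\ell$, the inductive step being a single application of the one-layer lemma followed by the nonnegativity of $\rmS$.

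The main obstacle is the bookkeeping inside the one-layer lemma: because $\vh^{v(l-1)}$ feeds $\vh^{v(l)}$ along two separate paths (the ego slot of $\gamma^{(l)}$ and the ego slot of every message), one must be careful that the diagonal of $\rmS$ collects \emph{both} $\theta_{\mathsf{u}}/\theta_{\mathsf{m}}$ and $c_{1}(\Gso^{\top}\mathbf{1})_{v}$ with no cross term lost, and that the constants $c_{\xi},c_{1},c_{2},\theta_{\mathsf{u}},\theta_{\mathsf{m}}$ end up arranged exactly as in the statement. Two smaller points need care: the clean appearance of $\diag(\Gso^{\top}\mathbf{1})$ and $\Gso$ (rather than some other function of the neighbourhood) uses that $\aggr$ is a sum or mean — for more general aggregators one needs a Lipschitz assumption with the edge weights entering linearly — and, when $\Gso$ has signed entries, one replaces $\gso^{uv}$ by $\lvert\gso^{uv}\rvert$ so that $\rmS$ is entrywise nonnegative, which is the property that lets the walk-sum collapse to the matrix power $\rmS^{L}$.
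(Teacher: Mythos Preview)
The paper does not actually prove this statement; it is quoted as Theorem~C.1 from \citet{di_giovanni2024how} and used as a black box in the proofs of \autoref{thm:mptcn_tts} and \autoref{thm:mptcn}. Your proposal is correct and is essentially the argument in the original source: a one-layer Jacobian estimate $\norm{\partial\vh^{v(l)}/\partial\vh^{w(l-1)}}\le c_{\xi}\theta_{\mathsf m}(\rmS)_{wv}$ for the concrete update rule in \autoref{eq:mp_ref}, followed by composition via the chain rule (or, equivalently, induction on the depth), with the entrywise nonnegativity of $\rmS$ letting the walk-sum collapse to $(\rmS^{L})_{uv}$. The only minor remark is that the relevant \gls{mpnn} class here is the specific one in \autoref{eq:mp_ref} under \autoref{ass:mpnn}, not the fully general $\gamma^{(l)}/\phi^{(l)}/\aggr$ form of \autoref{eq:mp}; this makes your ``main obstacle'' bookkeeping slightly simpler than you anticipated, since the ego and message contributions enter linearly through $\Theta_{\mathsf U}^{(l)}$ and $\Theta_{\mathsf M}^{(l)}$ and the aggregation is already a $\gso^{uv}$-weighted sum.
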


\section{Information propagation and over-squashing in STGNNs}
\label{sec:stgnns}

In line with previous works~\citep{cini2023taming, cini2023graph}, we consider \glspl{stgnn} obtained by stacking $L$ $\stmpl$ layers~\autorefp{eq:stmpl}, preceded and followed by differentiable encoding and decoding (readout) functions:

\begin{minipage}[b]{.47\linewidth}
    \begin{equation}
    \vh_{t}^{v(0)} = \encl\left(\vx_{t}^v\right), \label{eq:encoder}
    \end{equation}%
\end{minipage}
\hfill
\begin{minipage}[b]{.5\linewidth}
    \begin{equation}
    \hat{\vy}_{t}^{v} = \rol\left(\vh_t^{v (L)}\right) . \label{eq:readout}
    \end{equation}
\end{minipage}\\[.5em]
Note that the encoder is applied independently to each node and time step, while the readout produces estimates for the label using only node representations associated with the last time step. Most existing \glspl{stgnn} can be represented following this framework and differ primarily in the processing carried out in $\stmpl$. In the following, we consider $\stmpl$ operators resulting from the composition of message-passing, i.e., $\mpl$, and sequence-processing, i.e., $\tmpl$, operators.

\subsection{Spatiotemporal message-passing designs}
\label{sec:stmp_design}
A straightforward yet effective strategy to design an $\stmpl$ layer is to factorize processing along the two dimensions with a sequential application of $\mpl$ and $\tmpl$. This enables using existing operators, making the resulting \gls{stgnn} easy to implement. We can write the $l$-th $\stmpl$ layer as:

\begin{minipage}[b]{.47\linewidth}
    \begin{equation}
    \label{eq:tmp_disjoint}
    \vz^{v(l)}_{t \shortminus T:t} = \tmpl_{\scalemath{.67}{\times}L_{\mathsf{T}}}^{(l)}\left( \vh^{v(l \shortminus 1)}_{t \shortminus T:t}, \mT \right) \ \ \forall\, v \in \gV
    \end{equation}%
\end{minipage}
\hfill
\begin{minipage}[b]{.52\linewidth}
    \begin{equation}
    \label{eq:mp_disjoint}
    \vh^{v(l)}_{t \shortminus j} = \left(\mpl_{\scalemath{.67}{\times}L_{\mathsf{S}}}^{(l)}\left(\mZ^{(l)}_{t \shortminus j}, \Gso\right)\right)_v \ \ \forall\, j \in [0, T)
    \end{equation}
\end{minipage}

where $\vz^{v(l)}_{t \shortminus T:t} \in \sR^{T \times d}$ is the sequence of intermediate representations resulting from node-level temporal encoding, and $\mpl$ is applied independently (with shared parameters) across time steps. The subscript ${}\cdot{}_{\scalemath{.67}{\times}L_{\mathsf{T}}}$ (${}\cdot{}_{\scalemath{.67}{\times}L_{\mathsf{S}}}$) concisely denotes a stack of $L_{\mathsf{T}}$ ($L_{\mathsf{S}}$) functions of the same family -- with distinct parameters -- each receiving as input representation the output from the preceding function in the stack. Although processing is decoupled within a single $\stmpl$ layer, the resulting representations effectively incorporate information from the history of neighboring nodes. 

Unlike the graph domain, where \glspl{mpnn} have established themselves as the standard framework for processing relational data, the temporal domain lacks a unified approach that encompasses all architectures. Due to their architectural similarity to \glspl{gnn}, in this work, we focus on \glspl{tcn}, which allows for a more natural extension to spatiotemporal modeling and facilitates drawing analogies between temporal and graph-based representation learning.

\paragraph{\acrshortpl{mptcn}}
We call \glspl{mptcn} those \glspl{stgnn} obtained by combining \glspl{tcn} and \glspl{mpnn} following the framework defined in \autorefseq{eq:tmp_disjoint}{eq:mp_disjoint}~\citep{yu2018spatio, wu2019graph}. Specifically, the $\tmpl$ operator is implemented as a causal convolution of a nonlinear filter with $P$ elements over the temporal dimension~\citep{lecun1995convolutional, bai2018empirical, oord2016wavenet}. Causal convolutions can be expressed through a Toeplitz matrix formulation, which enables the analysis of their propagation dynamics. To formalize this, we introduce the lower-triangular, Toeplitz matrix $\rmR \in \{0,1\}^{T \times T}$ where $(\rmR)_{ij}=1$ if the input at time step $t-i$ influences the output at time step $t-j$. This matrix encodes temporal dependencies analogously to how the graph shift operator $\Gso$ encodes spatial relationships in \glspl{mpnn}. We define the $l$-th layer of a \gls{tcn} $\tcnl^{(l)}$ as:
\begin{equation}
    \label{eq:tcn_ref}
    \vh^{(l)}_{t \shortminus T:t} = \tcnl^{(l)} \left( \vh^{(l \shortminus 1)}_{t \shortminus T:t}, \rmR \right) = \sigma\left( \sum\nolimits_{p=0}^{T} \diag_p(\rmR)^\top\vh^{(l \shortminus 1)}_{t \shortminus T:t} \mW_p^{(l)} \right)
\end{equation}
where each $\mW_{p}^{(l)} \in \sR^{d \times d}$ is a matrix of learnable weights, $\sigma$ is an element-wise activation function (e.g., ReLU), and $\diag_p(\rmR)$ is the matrix obtained by zeroing all entries of $\rmR$ except those on its $p$-th lower diagonal. In standard convolutional implementations, we employ $\rmR = \sum\nolimits_{p=0}^{P - 1} \mT^p$, with $(\rmR)_{ij} = 1$ only if $0 \leq i - j < P$, such that $\diag_p(\rmR) = \mT^p$. This formulation reveals the structural parallels between temporal and spatial message passing, both conditioned on specialized operators ($\rmR$ and $\Gso$, respectively) that encode the underlying topology.

\paragraph{\Acrlong{tas} vs \acrlong{tts}}
\label{sec:tas_vs_tts}
By adjusting the number of outer layers $L$ and inner layers $L_{\mathsf{T}}$ and $L_{\mathsf{S}}$, we can control the degree of temporal-spatial integration while maintaining fixed total processing depth, $LL_{\mathsf{T}}$ and $LL_{\mathsf{S}}$, which we refer to as the \emph{temporal} and \emph{spatial budget} respectively. Fixing both budgets enables a fair experimental comparison between \gls{tts} and \gls{tas} variants. When $L=1$, processing becomes fully decoupled, yielding the computationally efficient \gls{tts} approach that has gained recent prominence~\citep{gao2022equivalence, satorras2022multivariate, cini2023sparse}.
This efficiency arises because encoder-decoder architectures only require representations at time $t$ and layer $L$ for readout (see \autoref{eq:readout}), allowing message passing on a single (static) graph with features $\smash{\mH^{(L)}_t}$.  Thus, with equivalent parameter counts and layer depths, the \gls{tts} approach offers substantially reduced computational complexity for spatial processing, scaling with $\bigo(T)$ -- a detailed discussion of the computational complexities is provided in \autoref{app:compcomp}.
This advantage is particularly valuable in practical applications, where temporal processing can occur asynchronously across nodes before being enriched with spatial context, enabling efficient distributed implementations~\citep{cini2023scalable}. However, \gls{tas} architectures may be more suitable and straightforward to adopt when the graph topology varies over time, with time steps associated with potentially different adjacency matrices.

\subsection{Spatiotemporal over-squashing}
While sensitivity analysis has become a standard approach to studying over-squashing in static \glspl{gnn}~\citep{topping2022understanding, di_giovanni2023over, di_giovanni2024how, gravina2025oversquashing}, it has been limited to graphs with static node features. We extend this analysis to \glspl{stgnn} by examining the sensitivity of node representations after both temporal and spatial processing. In particular, we are interested in studying how information propagation across space and time affects the sensitivity of learned representations to initial node features at previous time steps. Considering that $\smash{\partial \vh_{t}^{v(0)} / \partial \vx_{t \shortminus i}^{u} = \vzero}$ and $\smash{\partial \hat{\vy}_{t}^{v} / \partial \vh_{t \shortminus i}^{u(L)} = \vzero}$ for each $i \neq 0, u\neq v$, we analyze the Jacobian between node features after a stack of $L$ $\stmpl$ layers, i.e.,
\begin{equation}
    \label{eq:stgnn_jac}
    \jac[i]{u} \vh_{t}^{v(L)} := \frac{\partial \vh_{t}^{v(L)}}{\partial \vh_{t \shortminus i}^{u(0)}} \in \sR^{d \times d} .
\end{equation}
This quantity differs conceptually from the simpler static-graph setting~\autorefp{eq:gnn_jac}, as the temporal dimension represents an additional propagation axis. Notably, in decoupled $\stmpl$ functions (\autorefseq{eq:tmp_disjoint}{eq:mp_disjoint}), information flows strictly along separate dimensions: $\tmpl$ operates exclusively within the temporal domain, while $\mpl$ processes only spatial relationships, preventing cross-dimensional interactions. Hence, given an \gls{stgnn} of $L$ layers, for each layer $l \in [1, L]$, nodes $u, v \in \gV$ and $i,j \in [0, T)$ we have
\begin{equation}
    \label{thm:stgnn_fact_jac}
    \frac{\partial \vh_{t \shortminus j}^{v(l)}}{\partial \vh_{t \shortminus i}^{u(l \shortminus 1)}} =  %
    \underbrace{\frac{\partial \vh_{t \shortminus j}^{v(l)}}{\partial \vz_{t \shortminus j}^{u(l)}}}_{\mathrm{space}} %
    \underbrace{\frac{\partial \vz_{t \shortminus j}^{u(l)}}{\partial \vh_{t \shortminus i \vphantom{j}}^{u(l \shortminus 1)}}}_{\mathrm{time}} .
\end{equation}
This factorization allows us to independently study the effects of temporal and spatial processing within each layer on the output representations. Moreover, in the \gls{tts} case where $L=1$, this result provides us a factorized tool to investigate $\jac[i]{u} \vh_{t}^{v(L)}$ and measure how initial representations affect the final output.
While the spatial component has been extensively studied in the literature on \glspl{gnn} (as discussed in \autoref{sec:relwork}), the temporal dimension, especially the phenomenon of \textit{temporal over-squashing}, remains less explored. In the following section, we investigate how temporal processing affects representation learning in \glspl{tcn}, and consequently, \glspl{mptcn}. Following an incremental approach, we first discuss its effects on propagation dynamics in \glspl{tcn} in the next section, and then extend the analysis to the spatiotemporal setting in \autoref{sec:ovs_mptcn}.

\section{Over-squashing in \acrshortpl{tcn}}
\label{sec:ovs_tcn}
To isolate the temporal dynamics, we focus exclusively on the temporal processing component by analyzing encoder-decoder networks constructed from $L_{\mathsf{T}}$ successive $\tcnl$ layers positioned between encoding and readout operations.
In these architectures, each additional layer expands the network's temporal receptive field, enabling the model to capture progressively longer-range dependencies in the input sequence. This hierarchical processing creates an information flow pattern that can be precisely characterized through the powers of the temporal topology matrix $\rmR$. Specifically, $\rmR^l$ represents the temporal receptive field at the $l$-th layer, where entry $(\rmR^l)_{ij}$ quantifies the number of distinct paths through which information can propagate from the input at time step $t-i$ to the representation at time step $t-j$ after $l$ layers of processing.

Understanding how information flows through these paths is crucial for analyzing the model's capacity to effectively leverage temporal context. To formalize this analysis, we investigate how the output representations are influenced by perturbations in the input at preceding time steps. The following theorem establishes a bound on this sensitivity, revealing how the temporal topology governs information flow across layers and drawing important parallels to the graph domain.

\begin{restatable}{theorem}{tcnbound}
    \label{thm:tcnbound}
    Consider a \gls{tcn} with $L_{\mathsf{T}}$ successive $\tcnl$ layers as in \autoref{eq:tcn_ref}, all with kernel size $P$, and assume that $\big\|\mW_{p}^{(l)}\big\| \le \mathsf{w}$ for all $p<P$ and $l\leq L_{\mathsf{T}}$, and that $|\sigma^\prime| \leq c_\sigma$. For each $i, j \in [0, T)$, we have:
    \[
        \norm{\frac{\partial \vh^{(L_{\mathsf{T}})}_{t \shortminus j}}{\partial \vh^{(0)}_{t \shortminus i}}} 
        \leq \underbrace{\left(c_\sigma  \mathsf{w} \right)^{L_{\mathsf{T}}}\vphantom{\left(\rmR^{L_{\mathsf{T}}}\right)_{ij}}}_{\scalebox{.7}{$\mathrm{model}$}} \underbrace{\left(\rmR^{L_{\mathsf{T}}}\right)_{ij}}_{\scalebox{.7}{$\mathrm{temporal\ topology \hspace{-1cm}}$}}.
    \]
\end{restatable}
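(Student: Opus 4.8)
The plan is to follow the template behind \autoref{thm:mpnnspec}: bound the sensitivity contributed by a single $\tcnl$ layer, and then compose these bounds through the $L_{\mathsf{T}}$-layer stack with the multivariate chain rule, with the Toeplitz matrix $\rmR$ playing the structural role that the graph shift operator plays in the static case. Concretely, I would first establish the single-layer estimate $\norm{\partial \vh^{(l)}_{t \shortminus j}/\partial \vh^{(l \shortminus 1)}_{t \shortminus i}} \le c_\sigma\mathsf{w}\,(\rmR)_{ij}$ and then iterate it across depth.

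For the single layer, I would unfold the convolution in \autoref{eq:tcn_ref}. Since in the standard implementation $\rmR = \sum_{p=0}^{P-1}\mT^p$ we have $\diag_p(\rmR) = \mT^p$, so the $j$-th row of $\diag_p(\rmR)^{\top}\vh^{(l \shortminus 1)}_{t \shortminus T:t}$ is just $\vh^{(l \shortminus 1)}_{t \shortminus j \shortminus p}$ (with zero-padding when $j + p \ge T$, which only removes terms), giving $\vh^{(l)}_{t \shortminus j} = \sigma\big(\sum_{p=0}^{P-1}\vh^{(l \shortminus 1)}_{t \shortminus j \shortminus p}\mW^{(l)}_p\big)$. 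Differentiating, $\partial \vh^{(l)}_{t \shortminus j}/\partial \vh^{(l \shortminus 1)}_{t \shortminus i}$ vanishes unless $i = j + p$ for some $0 \le p < P$, i.e.\ unless $(\rmR)_{ij} = 1$; when it does not vanish it equals $\diag\big(\sigma'(\cdot)\big)\,\big(\mW^{(l)}_{i \shortminus j}\big)^{\top}$, so sub-multiplicativity of the spectral norm together with $|\sigma'| \le c_\sigma$ and $\norm{\mW^{(l)}_p} \le \mathsf{w}$ yields the single-layer bound, with the indicator structure of $\rmR$ absorbing the sparsity pattern.

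To chain the layers, I would expand $\partial \vh^{(L_{\mathsf{T}})}_{t \shortminus j}/\partial \vh^{(0)}_{t \shortminus i}$ as a sum over all intermediate temporal offsets $k_1,\dots,k_{L_{\mathsf{T}} \shortminus 1} \in [0, T)$ of products of single-layer Jacobians (setting $k_0 := i$ and $k_{L_{\mathsf{T}}} := j$), take spectral norms, and apply the triangle inequality and sub-multiplicativity to get $\norm{\cdot} \le (c_\sigma\mathsf{w})^{L_{\mathsf{T}}}\sum_{k_1,\dots,k_{L_{\mathsf{T}} \shortminus 1}}\prod_{l=1}^{L_{\mathsf{T}}}(\rmR)_{k_{l \shortminus 1}\,k_l}$; the remaining sum is, by definition of matrix multiplication, exactly $(\rmR^{L_{\mathsf{T}}})_{ij}$, which is the claimed bound. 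The calculus itself is routine; the part that needs care is the index bookkeeping — keeping the source/target orientation of $\rmR$ consistent so that the product of single-layer factors telescopes to $(\rmR^{L_{\mathsf{T}}})_{ij}$ rather than its transpose — together with making the single-layer claim rigorous at the right boundary of the window, where padding suppresses some taps so the true Jacobian is merely dominated by, not equal to, $c_\sigma\mathsf{w}(\rmR)_{ij}$. Both are handled by arguing throughout with the $\{0,1\}$-valued entries of $\rmR$ rather than with the exact supports.
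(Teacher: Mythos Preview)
Your proposal is correct and follows essentially the same approach as the paper: first establish the single-layer bound $\norm{\partial \vh^{(l)}_{t\shortminus j}/\partial \vh^{(l\shortminus 1)}_{t\shortminus i}} \le c_\sigma\mathsf{w}\,(\rmR)_{ij}$ via the explicit form of the convolution, then compose across depth with the chain rule so that the per-layer $\rmR$-factors collapse into $(\rmR^{L_{\mathsf T}})_{ij}$. The only cosmetic difference is that the paper packages the multi-layer step as an induction on depth, whereas you unroll the chain rule over all intermediate time indices at once; the two arguments are equivalent.
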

Proof provided in \appref{app:tempsqu_proof}. Similar to the spatial case in \autoref{thm:mpnnspec}, this bound comprises two components: one dependent on model parameters and another on the temporal topology encoded in $\rmR$. Unlike spatial topologies, however, the temporal structure follows a specific, well-defined pattern that enables deeper theoretical analysis.
The lower-triangular Toeplitz structure of $\rmR$ ensures that its powers maintain this structure \citep{krim2022powers}. This property leads to a distinctive pattern of influence distribution, formalized in the following proposition:

\begin{restatable}{proposition}{tempsqu}
    \label{thm:tempsqu}
    Let \( \rmR \in \sR^{T \times T} \) be a real, lower-triangular, Toeplitz band matrix with lower bandwidth \( P - 1 \), i.e., with \( \left(\rmR\right)_{ij} = r_{i-j} \) for \( 0 \le i-j < P \), and $P\geq 2$, $r_1\neq 0$, and $r_0\neq 0$. Then for any $i>j$ we have $\left|\frac{\left(\rmR^l\right)_{j0}}{\left(\rmR^l\right)_{i0}}\right| \rightarrow 0$ as $l\rightarrow \infty$. In fact $\left|\frac{\left(\rmR^l\right)_{j0}}{\left(\rmR^l\right)_{i0}}\right| = \mathcal{O}(l^{-(i-j)})$. Informally, this means that the final token receives considerably more influence from tokens positioned earlier.
\end{restatable}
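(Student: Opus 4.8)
The plan is to reduce the statement to the asymptotics of the coefficients of powers of a single polynomial. Because $\rmR$ is lower-triangular and Toeplitz, so is every power $\rmR^{l}$, and for a fixed index $k<T$ the entry $(\rmR^{l})_{k0}$ is independent of the truncation size $T$: every length-$l$ walk of nonzero matrix entries descending from $k$ to $0$ stays within $[0,k]$. Associating to $\rmR$ the polynomial $\rho(x)=\sum_{m=0}^{P-1} r_{m}x^{m}$, the product of lower-triangular Toeplitz matrices corresponds to multiplication of the associated polynomials, so, writing $[x^{k}]g$ for the coefficient of $x^{k}$ in a polynomial $g$,
\[
  r^{(l)}_{k} := (\rmR^{l})_{k0} = [x^{k}]\,\rho(x)^{l}, \qquad 0\le k<T,
\]
and the goal becomes to show $\bigl|r^{(l)}_{j}/r^{(l)}_{i}\bigr| = \mathcal{O}(l^{-(i-j)})$ for $i>j$.

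Next I would expand. Writing $\rho(x)=r_{0}+x\,\psi(x)$ with $\psi(x)=r_{1}+r_{2}x+\dots+r_{P-1}x^{P-2}$, so that $\psi(0)=r_{1}$ (the hypothesis $P\ge 2$ ensures $r_{1}$ lies in the band), the binomial theorem gives
\[
  \rho(x)^{l} = \sum_{s=0}^{l}\binom{l}{s} r_{0}^{\,l-s}\, x^{s}\psi(x)^{s}.
\]
Since $x^{s}\psi(x)^{s}$ has lowest degree $s$, only the terms with $s\le k$ contribute to the coefficient of $x^{k}$, whence
\[
  r^{(l)}_{k} = \binom{l}{k} r_{0}^{\,l-k} r_{1}^{\,k} + \sum_{s=0}^{k-1}\binom{l}{s} r_{0}^{\,l-s}\,[x^{k-s}]\psi(x)^{s}.
\]
Each $[x^{k-s}]\psi(x)^{s}$ is a constant independent of $l$, and $\binom{l}{s}=l^{s}/s!+\mathcal{O}(l^{s-1})$ for fixed $s$; substituting yields the key estimate
\[
  r^{(l)}_{k} = r_{0}^{\,l}\left( \frac{(r_{1}/r_{0})^{k}}{k!}\, l^{k} + \mathcal{O}(l^{k-1}) \right),
\]
i.e.\ $r^{(l)}_{k}$ is $r_{0}^{\,l}$ times a polynomial in $l$ of exact degree $k$ with leading coefficient $(r_{1}/r_{0})^{k}/k!$.

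Forming the ratio cancels the common factor $r_{0}^{\,l}$, and because $r_{1}\neq 0$ the denominator's degree-$i$ coefficient is nonzero, so for all sufficiently large $l$
\[
  \left|\frac{(\rmR^{l})_{j0}}{(\rmR^{l})_{i0}}\right|
  = \left|\frac{(r_{1}/r_{0})^{j} l^{j}/j! + \mathcal{O}(l^{j-1})}{(r_{1}/r_{0})^{i} l^{i}/i! + \mathcal{O}(l^{i-1})}\right|
  \;\sim\; \frac{i!}{j!}\left|\frac{r_{0}}{r_{1}}\right|^{\,i-j} l^{-(i-j)},
\]
which is $\mathcal{O}(l^{-(i-j)})$ and, since $i>j$, tends to $0$.

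The binomial expansion and the elementary asymptotics of $\binom{l}{s}$ are routine; the only point requiring care is ruling out a cancellation that could spoil the leading order, which is exactly where the hypotheses enter: $r_{0}\neq 0$ lets us factor out $r_{0}^{\,l}$ (which is then never zero), and $r_{1}\neq 0$ makes the degree-$k$ coefficient $(r_{1}/r_{0})^{k}/k!$ nonzero for every $k$, so numerator and denominator genuinely grow at the different orders $l^{j}$ and $l^{i}$. Without $r_{0}\neq 0$ the claim fails (e.g.\ $\rmR=\mT$, where $(\rmR^{l})_{i0}$ is eventually $0$ for fixed $i$, so the ratio is not even defined), and without $r_{1}\neq 0$ the decay exponent would degrade; beyond bookkeeping the lower-order terms I anticipate no further obstacle.
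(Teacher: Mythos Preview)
Your proposal is correct and follows essentially the same argument as the paper. The paper writes $\rmR=r_{0}\mI+\rmN$ with $\rmN$ strictly lower-triangular nilpotent and applies the commuting binomial expansion, then identifies the leading term via $(\rmN^{k})_{k0}=r_{1}^{k}$; your polynomial formulation $\rho(x)=r_{0}+x\psi(x)$ with $(\rmR^{l})_{k0}=[x^{k}]\rho(x)^{l}$ is the same decomposition in generating-function language, and the subsequent binomial expansion, extraction of the $\binom{l}{k}r_{0}^{l-k}r_{1}^{k}$ leading term, and ratio asymptotics are identical in both.
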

%
Proof provided in \appref{app:mptcn_proof}. \autoref{thm:tempsqu} reveals a critical insight: \textbf{causal convolutions progressively diminish sensitivity to recent information while amplifying the influence of temporally distant inputs}. This creates a form of temporal over-squashing that inverts the pattern observed in \glspl{mpnn}, where distant nodes suffer from reduced influence \citep{alon2021bottleneck}. We show this graphically in \autoref{fig:temporal-oversquashing}\hyperref[fig:temporal-oversquashing]{a}, which also displays the powers of the temporal topology matrix (\hyperref[fig:temporal-oversquashing]{a.1}--\hyperref[fig:temporal-oversquashing]{a.2}); a yellow color in matrix entry $(i,j)$ indicates strong influence exerted from time step $t{-}i$ to $t{-}j$. As more convolutional layers are stacked, the influence of temporally recent inputs diminishes relative to more distant ones. This behavior stems from the structure of causal convolutions, which incrementally incorporate more information into a fixed-length context vector. Indeed, causal convolutions propagate information along powers of a directed path graph. Over multiple layers, earlier time steps accumulate influence through an increasing number of propagation paths, while more recent inputs have fewer paths for propagating their initial information. Crucially, because causal convolutions are forward-only, each time step can preserve its information in the associated context vector through self-loops only, with a major impact on the last time step in the sequence.

\begin{figure}[t]
     \begin{subfigure}[t]{.32\linewidth}
        \centering
        \includegraphics[height=2.6cm]{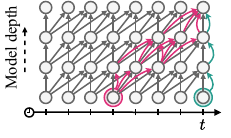}
         \vspace{-.2cm}
         \caption{Standard causal convolution}
         \label{fig:tcn_r_drawing}
     \end{subfigure}
     \begin{subfigure}[t]{.29\linewidth}
        \centering
        \includegraphics[height=2.6cm]{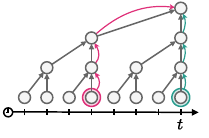}
         \vspace{-.2cm}
         \caption{Dilated convolution}
         \label{fig:tcn_rd_drawing}
     \end{subfigure}
     \begin{subfigure}[t]{.29\linewidth}
        \centering
        \includegraphics[height=2.6cm]{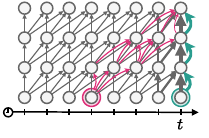}
         \vspace{-.2cm}
         \caption{Normalized convolution}
         \label{fig:tcn_rn_drawing}
         \vspace{.2cm}
     \end{subfigure}
    \begin{subfigure}[t]{\linewidth}
        \includegraphics[width=\linewidth]{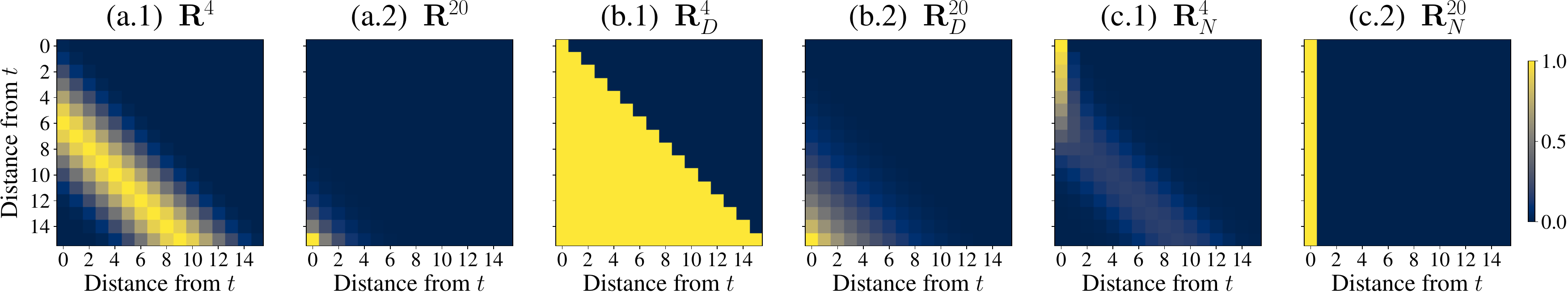}
     \end{subfigure}
    \caption{
    \textbf{Top row:} paths for information flow from the {\color{wckdgreen!78!black} most recent} and an {\color{wckdpurple!86!black} earlier} time step to the last-layer representation at time $t$. 
    \textbf{Bottom row:} evolution of the temporal receptive field after 4 and 20 layers, seen through the powers of the temporal topology matrix. For standard ($\rmR$) and dilated ($\rmR_D$) convolution, the highest-influence region shifts towards the initial time step, while for row-normalized ($\rmR_N$) convolution, we observe a progressive shift to a uniform distribution across all time steps (first column). Entries are scaled matrix-wise in the range $[0,1]$ for comparison purposes.}
    \label{fig:temporal-oversquashing}
    \vspace{-1em}
\end{figure}

When a model's receptive field exceeds the sequence length, i.e., $(P-1)L_{\mathsf{T}} \gg T$, the earliest time step exerts disproportionate influence on the final output compared to any intermediate time step, mirroring the recently investigated \textit{attention sink} effect in Transformers \citep{xiao2023efficient, barbero2024transformers}. This behavior directly undermines the locality bias that causal convolutions are designed to enforce, particularly in time series applications, where recent observations typically carry greater relevance.
To achieve a more balanced receptive field that preserves local information while incorporating broader context, we can act by modifying $\rmR$, effectively implementing \emph{temporal graph rewiring} analogous to techniques used to address over-squashing in \glspl{mpnn} \citep{topping2022understanding, di_giovanni2023over}.

\paragraph{Temporal graph rewiring}
\autoref{thm:tempsqu} outlines that the influence from recent time steps progressively vanishes when the temporal topology (1) remains fixed across layers and (2) maintains a lower-triangular Toeplitz structure. We propose two rewiring approaches targeting these assumptions separately.
Our first approach addresses the fixed topology assumption by employing different matrices $\rmR^{(l)}$ at each layer (each maintaining a lower-triangular Toeplitz structure). This modifies the Jacobian bound in \autoref{thm:tcnbound} to depend on $\prod_{l=1}^{L_{\mathsf{T}}} \rmR^{(l)}$ rather than $\rmR^{L_{\mathsf{T}}}$. Dilated convolutions naturally implement this approach, applying filters with progressively increasing gaps $d^{(l)}$ (dilation rates) between elements~\citep{oord2016wavenet} (\autoref{fig:temporal-oversquashing}\hyperref[fig:temporal-oversquashing]{b}). These convolutions produce matrices $\rmR^{(l)}_{D}$ with nonzero entries $\big(\rmR^{(l)}_{D}\big)_{ij} = r_{i-j}$ only when $i - j = kd^{(l)}$ for $k \in [0, P)$.
When $d^{(l)}=P^{l \shortminus 1}$, the receptive field expands exponentially while ensuring $\big(\prod_{l}\rmR^{(l)}_{D}\big)_{p0} = 1$ for each $p \le P^l$, hence distributing influence equally across all time steps in the receptive field (\autoref{fig:temporal-oversquashing}\hyperref[fig:temporal-oversquashing]{b.1}). Thus, besides efficiency, dilated convolutions have the advantage of preserving local information better than standard convolutions. However, this strategy results in the trivial identity $\rmR_{D}^{(l)}=\mI_T$ for $l > \log_{P} T$, with deeper architectures required to reset the dilation rate every $m$ layers, i.e., $d^{(l)} = P^{(l \shortminus 1) \bmod m}$. While effective in practice, these resets reintroduce over-squashing patterns, as shown in \autoref{fig:temporal-oversquashing}\hyperref[fig:temporal-oversquashing]{b.2}.

Our second approach targets the Toeplitz assumption by row-normalizing $\rmR$ to create $\rmR_{N} = \diag(\rmR\vone)^{-1}\rmR$, where each entry $\left( \rmR_{N} \right)_{ij}$ is normalized by the number of edges from time step $t - i$ (\autoref{fig:temporal-oversquashing}\hyperref[fig:temporal-oversquashing]{c}). This normalization maintains stronger influence from recent time steps while expanding the receptive field, with $\left( \rmR_{N}^l \right)_{i0}$ converging to $1$ for all $i \in [0, T)$ as $l \rightarrow \infty$ (illustrated in \autoref{fig:temporal-oversquashing}\hyperref[fig:temporal-oversquashing]{c.1}--\hyperref[fig:temporal-oversquashing]{c.2} and proven in \autoref{prop:rn_asympt} in the appendix). Despite violating the Toeplitz structure, this approach remains computationally efficient by simply dividing the input at time step $t-i$ by $\min(i+1, P)$. Nonetheless, this normalization primarily benefits the final time step prediction, making it particularly suitable for forecasting tasks where only the last output is used. For tasks requiring readout at intermediate time steps (e.g., imputation), the benefits of such a mitigation may be limited.

\paragraph{Empirical validation} We empirically validate the effects of our proposed temporal convolution modifications through two synthetic sequence memory tasks: \gls{copyfirst} and \gls{copylast}, where the goal is to output, respectively, the first or last observed value in a sequence of $T=16$ random values sampled uniformly in $[0,1]$. Since always predicting $0.5$ yields an \gls{mse} of $\approx 0.083$, we consider a task solved when the test \gls{mse} is lower than $0.001$ and report the success rate across multiple runs. We compare three \glspl{tcn} architectures obtained by stacking $L_{\mathsf{T}}$ $\tcnl$ layers with different temporal convolution topologies: (1) $\rmR$, the standard causal convolution; (2) $\rmR_{N}$, where $\rmR$ is row-normalized; (3) $\rmR_{D}$, implementing dilated convolutions with dilation rates $d^{(l)} = P^{(l \shortminus 1) \bmod m}$ and with $M=4$. We set $P=4$ in $\rmR$ and $\rmR_{N}$, and $P=2$ in $\rmR_{D}$ and vary $L_{\mathsf{T}}$ from $1$ to $20$; \autoref{fig:tcns} shows the simulation results.
For small $L_{\mathsf{T}}$, the \gls{copyfirst} task remains unsolved as the initial time step falls outside the receptive field, while standard convolutions fail on \gls{copylast} when $L_{\mathsf{T}} > 5$ due to the sink phenomenon. As network depth increases, performance degrades across all approaches, despite \gls{copyfirst} remaining more tractable given the sink-induced bias toward earlier time steps, suggesting a fundamental connection to vanishing gradients~\citep{bengio1994learning}.

\begin{figure}[t]
    \includegraphics[width=\textwidth]{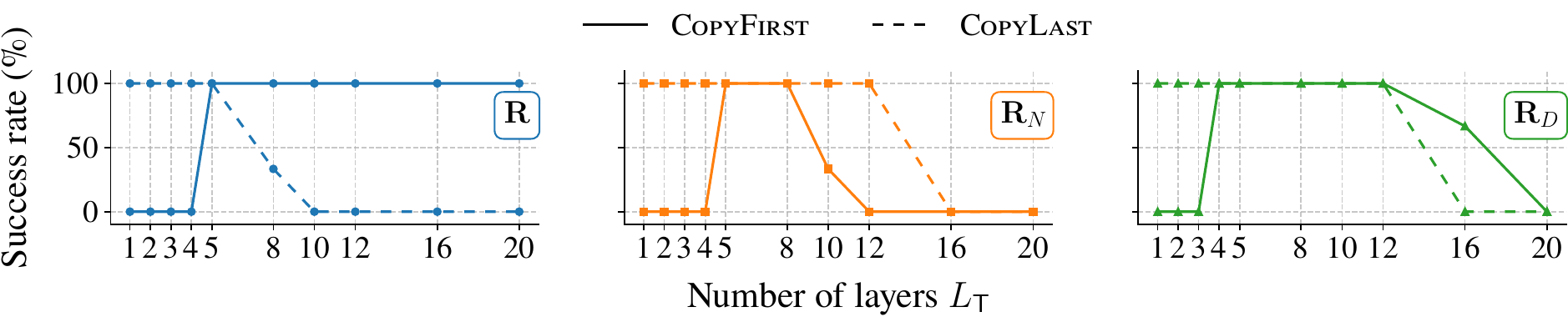}
    \caption{Success rate (\%) on the tasks of copying the first or last observed value across different temporal topologies and number of layers $L_{\mathsf{T}}$.}
    \label{fig:tcns}
    \vspace{-1em}
\end{figure}

\section{Over-squashing in \acrshortpl{mptcn}}
\label{sec:ovs_mptcn}
Having analyzed over-squashing in temporal and spatial domains separately, we now integrate these insights to investigate information flow in \glspl{mptcn}, in which temporal and spatial processing are interleaved across layers. For our analysis, we consider the same class of message-passing functions employed by \citet{di_giovanni2024how} in \autoref{thm:mpnnspec}, which generalizes many popular \glspl{mpnn}~\citep{xu2019powerful, kipf2017semisupervised, atwood2016diffusion, bresson2017residual}. Following the space-time factorization established in \autoref{thm:stgnn_fact_jac}, we observe that within a single layer, the sensitivity is simply the product of the spatial bound from \autoref{thm:mpnnspec} and the temporal bound from \autoref{thm:tcnbound}. For \gls{tts} architectures with $L=1$, this yields:
\begin{equation}
    \norm{\jac[i]{u} \vh_{t}^{v(L)}}  \leq %
        \underbrace{\left(c_{\xi} \theta_{\mathsf{m}}\right)^{L_{\mathsf{S}}} \left(\rmS^{L_{\mathsf{S}}}\right)_{uv}}_{\mathrm{space}} %
        \underbrace{\left(c_\sigma  \mathsf{w} \right)^{L_{\mathsf{T}}}\left(\rmR^{L_{\mathsf{T}}}\right)_{i0}}_{\mathrm{time}} ,
\end{equation}

This bound directly measures how input features influence the final representations used by the readout layer. However, this result only addresses the case where $L=1$ (the \gls{tts} approach). The question remains: how does information propagate when temporal and spatial processing alternate multiple times ($L>1$, the \gls{tas} approach)? To answer this question, we derive the following theorem that characterizes sensitivity across \glspl{mptcn} with any number of alternating processing blocks.

\begin{restatable}{theorem}{mptcn}
\label{thm:mptcn}%
    Consider an \acrshort{mptcn} with $L$ $\stmpl$ layers, each consisting of $L_{\mathsf{T}}$ temporal ($\tmpl$) and $L_{\mathsf{S}}$ spatial ($\mpl$) layers as defined in \autorefseq{eq:tmp_disjoint}{eq:mp_disjoint}. Assume that each $\tmpl$ layer satisfies the conditions of \autoref{thm:tcnbound}, and each $\mpl$ layer satisfies the assumptions in \autoref{thm:mpnnspec}.
    Then, for any $v, u \in \gV$ and $i,j \in [0, T)$, the following holds:
    \[
        \norm{\frac{\partial \vh^{v(L)}_{t \shortminus j}}{\partial \vh^{u(0)}_{t \shortminus i}}} \leq %
        \underbrace{\left(c_{\xi} \theta_{\mathsf{m}}\right)^{LL_{\mathsf{S}}} \left(c_\sigma  \mathsf{w} \right)^{LL_{\mathsf{T}}} \vphantom{\left(\rmR^{LL_{\mathsf{T}}}\right)_{ij}}}_{\mathrm{model}} %
        \underbrace{\left(\rmS^{LL_{\mathsf{S}}}\right)_{uv\vphantom{j}} \left(\rmR^{LL_{\mathsf{T}}}\right)_{ij}}_{\mathrm{spatiotemporal\ topology}}.
    \]
\end{restatable}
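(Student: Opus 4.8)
The plan is to peel off one $\stmpl$ layer at a time and, within each layer, combine the spatial bound of \autoref{thm:mpnnspec} with the temporal bound of \autoref{thm:tcnbound}; the multi-layer statement then follows by unrolling the chain rule and collapsing the resulting sum over intermediate node--time pairs into a product of two matrix powers.

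First I would establish a per-layer estimate. Fix a layer $l \in [1, L]$. The exact factorization in \autoref{thm:stgnn_fact_jac} gives, for all $u, v \in \gV$ and $i, j \in [0, T)$,
\[
\frac{\partial \vh^{v(l)}_{t \shortminus j}}{\partial \vh^{u(l \shortminus 1)}_{t \shortminus i}} = \frac{\partial \vh^{v(l)}_{t \shortminus j}}{\partial \vz^{u(l)}_{t \shortminus j}} \, \frac{\partial \vz^{u(l)}_{t \shortminus j}}{\partial \vh^{u(l \shortminus 1)}_{t \shortminus i}} .
\]
The first factor is the input--output Jacobian of the stack of $L_{\mathsf{S}}$ $\mpl$ layers acting on the static graph frozen at time $t-j$, so \autoref{thm:mpnnspec} applies with depth $L_{\mathsf{S}}$ and gives $\norm{\partial \vh^{v(l)}_{t \shortminus j} / \partial \vz^{u(l)}_{t \shortminus j}} \le (c_\xi \theta_{\mathsf{m}})^{L_{\mathsf{S}}} (\rmS^{L_{\mathsf{S}}})_{uv}$; the second factor is the input--output Jacobian of the stack of $L_{\mathsf{T}}$ temporal layers acting on node $u$'s sequence, so \autoref{thm:tcnbound} applies with depth $L_{\mathsf{T}}$ and gives $\norm{\partial \vz^{u(l)}_{t \shortminus j} / \partial \vh^{u(l \shortminus 1)}_{t \shortminus i}} \le (c_\sigma \mathsf{w})^{L_{\mathsf{T}}} (\rmR^{L_{\mathsf{T}}})_{ij}$. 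Submultiplicativity of the spectral norm then yields, for every layer $l$ (the constants and operators being independent of $l$ by the uniform hypotheses),
\[
\norm{\frac{\partial \vh^{v(l)}_{t \shortminus j}}{\partial \vh^{u(l \shortminus 1)}_{t \shortminus i}}} \le (c_\xi \theta_{\mathsf{m}})^{L_{\mathsf{S}}} (c_\sigma \mathsf{w})^{L_{\mathsf{T}}} \, (\rmS^{L_{\mathsf{S}}})_{uv} \, (\rmR^{L_{\mathsf{T}}})_{ij} .
\]

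Next I would unroll across the $L$ outer layers. The multivariate chain rule gives
\[
\frac{\partial \vh^{v(L)}_{t \shortminus j}}{\partial \vh^{u(0)}_{t \shortminus i}} = \sum_{\substack{w_1, \dots, w_{L \shortminus 1} \in \gV \\ s_1, \dots, s_{L \shortminus 1} \in [0, T)}} \ \prod_{l = 1}^{L} \frac{\partial \vh^{w_l(l)}_{t \shortminus s_l}}{\partial \vh^{w_{l \shortminus 1}(l \shortminus 1)}_{t \shortminus s_{l \shortminus 1}}} , \qquad (w_0, s_0) = (u, i), \ \ (w_L, s_L) = (v, j) .
\]
Applying the triangle inequality, submultiplicativity, and the per-layer estimate, then pulling out the model constants, reduces the bound to $(c_\xi \theta_{\mathsf{m}})^{L L_{\mathsf{S}}} (c_\sigma \mathsf{w})^{L L_{\mathsf{T}}}$ times $\sum \prod_{l} (\rmS^{L_{\mathsf{S}}})_{w_{l \shortminus 1} w_l} (\rmR^{L_{\mathsf{T}}})_{s_{l \shortminus 1} s_l}$. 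Because the summand splits into a factor depending only on the $w$'s and one depending only on the $s$'s, the sum factorizes, and each factor is an iterated matrix product: $\sum_{w_1, \dots, w_{L \shortminus 1}} \prod_{l} (\rmS^{L_{\mathsf{S}}})_{w_{l \shortminus 1} w_l} = \big( (\rmS^{L_{\mathsf{S}}})^{L} \big)_{uv} = (\rmS^{L L_{\mathsf{S}}})_{uv}$, and analogously the temporal sum equals $(\rmR^{L L_{\mathsf{T}}})_{ij}$. Multiplying everything together gives the claimed bound.

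The main obstacle is the bookkeeping in this unrolling: one must track that inside each $\stmpl$ layer the temporal operator changes the time index but not the node, while the spatial operator changes the node index but not the time, so that the cross-layer composition is genuinely a sum over \emph{all} intermediate (node, time) pairs; and one must verify that the matrix-power identities $(\rmS^{L_{\mathsf{S}}})^{L} = \rmS^{L L_{\mathsf{S}}}$ and $(\rmR^{L_{\mathsf{T}}})^{L} = \rmR^{L L_{\mathsf{T}}}$ line up with the spatial and temporal budgets $L L_{\mathsf{S}}$ and $L L_{\mathsf{T}}$ appearing in the statement. The sum-to-product collapse additionally relies on all entries of $\rmS$ and $\rmR$ being nonnegative under the hypotheses of \autoref{thm:mpnnspec} and \autoref{thm:tcnbound}, so that the bound is termwise valid with no cancellation. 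Everything else is a direct application of \autoref{thm:stgnn_fact_jac}, \autoref{thm:mpnnspec}, \autoref{thm:tcnbound}, and elementary norm inequalities.
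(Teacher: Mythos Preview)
Your proposal is correct and follows essentially the same approach as the paper: establish a single-layer bound by combining \autoref{thm:mpnnspec} and \autoref{thm:tcnbound} via the factorization \autoref{thm:stgnn_fact_jac}, then compose across the $L$ outer layers via the chain rule and collapse the sum over intermediate node--time pairs into the product $(\rmS^{LL_{\mathsf{S}}})_{uv}(\rmR^{LL_{\mathsf{T}}})_{ij}$. The only cosmetic difference is that the paper packages the single-layer bound as a separate lemma and then proceeds by induction on $L$, whereas you unroll all $L$ layers at once and factor the resulting double sum directly; the two arguments are equivalent.
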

The proof is provided in \appref{app:mptcn_proof}. This result bounds the influence of input features $\vh^{u(0)}_{t \shortminus i}$ on output representation $\vh^{v(L)}_{t \shortminus j}$, revealing a clean separation between model parameters and topological factors, as well as between spatial and temporal components. Two key implications emerge from this factorization.
First, the bound's multiplicative structure across space and time dimensions persists regardless of how many $\stmpl$ layers are used. This means that redistributing the computational budget among outer layers $L$ and inner layers $L_{\mathsf{T}}$ and $L_{\mathsf{S}}$ does not alter the bound's characteristics. Therefore, \textbf{from the perspective of information propagation, \gls{tts} architectures ($L=1$) are not inherently limited compared to \gls{tas} architectures ($L>1$)}. While this does not guarantee equivalence in expressivity or optimization dynamics, it provides a principled justification for adopting more computationally efficient \gls{tts} designs without compromising how information flows.
Second, the theorem reveals that \textbf{spatiotemporal over-squashing in \glspl{mptcn} arises from the combined effects of spatial and temporal over-squashing}. This is evident in the spatiotemporal topology component, where both the spatial distance between nodes $u$ and $v$ and the temporal distance between time steps $t-i$ and $t-j$ contribute equally to potential bottlenecks. This insight carries significant practical implications: addressing over-squashing effectively requires targeting both dimensions simultaneously. Improving only one component -- through either spatial or temporal graph rewiring alone -- will prove insufficient if bottlenecks persist in the other dimension.

\begin{figure}[t]
    \vskip 0.1in
    \centering
    \includegraphics[width=\linewidth]{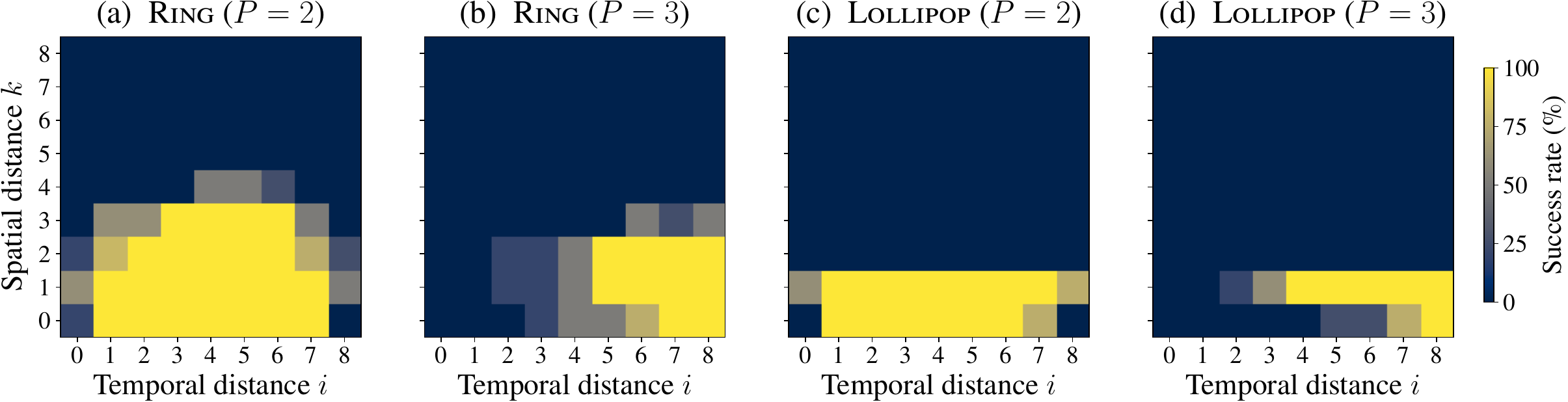}
    \caption{Success rate (\%) of \gls{tts} \acrshort{mptcn}s on the \gls{rocketman} dataset, where the goal is to copy the average value associated with $k$-hop neighbors at time step $t-i$. The tasks vary for the type of graph used (\textsc{Ring} or \textsc{Lollipop}) and size of $P$ ($2$ or $3$). 
    }
    \label{fig:memst}
    \vskip -0.1in
\end{figure}

\subsection{Empirical validation}
To validate our theoretical results, we conducted experiments on both synthetic and real-world tasks that highlight the effects of spatiotemporal over-squashing. The reference \gls{mptcn} architecture used in the experiments features the Diffusion Convolution (DCNN) operator \citep{atwood2016diffusion} as the $\mpl$ layer.

\paragraph{Synthetic environments}
We design a synthetic memory task named \gls{rocketman} where, given a graph and a time window of random values, the model must retrieve the average value at time step $t-i$ for nodes exactly $k$ hops away from a target node. We keep the input size constant across tasks and employ a \gls{tts} architecture with precisely enough layers in each dimension to span the entire input space, reporting success rates across multiple runs.
The results, shown in \autoref{fig:memst}, reveal two key patterns consistent with our theory. First, the task is significantly more challenging on the \textsc{Lollipop} graph compared to the \textsc{Ring} graph, confirming the known spatial over-squashing characteristics of these topologies~\citep{di_giovanni2024how}. Second, as the convolutional filter size $P$ increases, we observe that the task becomes more challenging for lower temporal distances, aligning perfectly with our analysis of \glspl{tcn}. Complete experimental details, including comparative results for a \gls{tas} architecture, are provided in \appref{app:experiments}, while in \appref{app:neigh_match} we show results on \textsc{TemporalNeighboursMatch}, an adaptation of the synthetic environment \textsc{NeighboursMatch} \citep{alon2021bottleneck} to the spatiotemporal setting.

\paragraph{Real-world benchmarks}
\begin{wraptable}{R}{.5\textwidth}
\vspace{-.4cm}
\centering
\small
\caption{Forecasting error (MAE) of \acrshortpl{mptcn} with fixed budget in real-world benchmarks.}
\label{tab:realworld}
\setlength{\tabcolsep}{3pt}
\setlength{\aboverulesep}{0pt}
\setlength{\belowrulesep}{0pt}
\renewcommand{\arraystretch}{1.35}
\resizebox{.49\textwidth}{!}{%
\begin{tabular}{c|c|c|ccc}
        \toprule
        \multicolumn{2}{c|}{\sc Models} & $L$ & \gls{la} & \gls{bay} & \gls{engrad} \\
        \midrule
        \multirow{6}{*}{\rotatebox[origin=c]{90}{\acrshort{mptcn}}} & \multirow{3}{*}{$\rmR$} & 6 & 3.19{{\tiny $\pm$0.02}} & 1.66{{\tiny $\pm$0.00}} & 44.43{{\tiny $\pm$0.41}} \\
        & & 3 & 3.19{{\tiny $\pm$0.01}} & 1.65{{\tiny $\pm$0.01}} & 43.83{{\tiny $\pm$0.03}} \\
        & & 1 & 3.14{{\tiny $\pm$0.02}} & 1.63{{\tiny $\pm$0.01}} & 44.47{{\tiny $\pm$0.42}} \\
        \cmidrule[.4pt]{2-6}
        & \multirow{3}{*}{$\rmR_{N}$} & 6 & 3.17{{\tiny $\pm$0.02}} & 1.65{{\tiny $\pm$0.01}} & 41.82{{\tiny $\pm$0.38}} \\
        & & 3 & 3.17{{\tiny $\pm$0.01}} & 1.65{{\tiny $\pm$0.00}} & 41.78{{\tiny $\pm$0.09}} \\
        & & 1 & 3.16{{\tiny $\pm$0.01}} & 1.65{{\tiny $\pm$0.01}} & 40.38{{\tiny $\pm$0.08}} \\
        \bottomrule
        \bottomrule
        \multicolumn{3}{c|}{\acrshort{gwnet} (orig.)} & 3.02{{\tiny $\pm$0.02}} & 1.55{{\tiny $\pm$0.01}} & 40.50{{\tiny $\pm$0.27}} \\
        \cmidrule[.4pt]{1-6}
        \multicolumn{3}{c|}{\acrshort{gwnet} \acrshort{tts}} & 3.00{{\tiny $\pm$0.01}} & 1.57{{\tiny $\pm$0.00}} & 40.64{{\tiny $\pm$0.29}} \\
        \bottomrule
\end{tabular}
}
\vspace{-4pt}
\end{wraptable}

To verify that our theoretical insights extend to practical applications, we evaluated various \glspl{mptcn} of \emph{fixed spatial and temporal budgets} $LL_{\mathsf{S}}$ and $LL_{\mathsf{T}}$ respectively, on three spatiotemporal forecasting benchmarks: \gls{la}~\citep{li2018diffusion}, \gls{bay}~\citep{li2018diffusion}, and \gls{engrad}~\citep{marisca2024graph}. \autoref{tab:realworld} presents the prediction errors in terms of \gls{mae}.
The results offer several insights that support our theoretical analysis. First, \gls{tas} and \gls{tts} approaches perform comparably on average, with the more efficient \gls{tts} models outperforming in the majority of cases. This empirical finding supports our theoretical conclusion that the \gls{tts} design is not inherently limited from an information propagation perspective, although additional factors beyond may also influence comparative performance.
Second, in the \gls{engrad} dataset, which requires larger filter sizes to cover the input sequence, the row-normalization approach consistently improves performance. This hints that temporal over-squashing affects standard convolutional models in practical scenarios where longer temporal contexts are needed, as our theory suggests.
Finally, we compare \gls{gwnet}~\citep{wu2019graph} -- a widely-used and more complex \gls{tas} architecture -- against its \gls{tts} counterpart. Results show that our findings remain valid even when more sophisticated architectural components are involved.

\paragraph{Combining spatial and temporal rewiring}
\begin{wraptable}{R}{.5\textwidth}
\centering
\small
\caption{Forecasting error (MAE) of \acrshort{tts} \acrshortpl{mptcn} with and without spatial and temporal graph rewiring in \gls{engrad}. 
}
\label{tab:spatiotemporal_rewiring}
\setlength{\tabcolsep}{6pt}
\setlength{\aboverulesep}{0pt}
\setlength{\belowrulesep}{0pt}
\renewcommand{\arraystretch}{1.5}
\vspace{-3pt}
\begin{tabular}{c|c|ccc}
        \cmidrule[.8pt]{2-4}
         \multicolumn{1}{c}{} & \multirow{2}{*}{\parbox{1cm}{\centering Original graph}} & \multicolumn{2}{c}{\textsc{FoSR} rewiring} \\
        \cmidrule[.5pt]{3-4}
         \multicolumn{1}{c}{} & & w/ RGCN & w/ DCNN\\
        \toprule
        {$\rmR$} & \cellcolor{gray!25}{44.47{\tiny $\pm$0.42}} & \cellcolor{gray!20}{43.78{\tiny $\pm$0.29}} & \cellcolor{gray!15}{43.50{\tiny $\pm$0.08}} \\
        \midrule
        {$\rmR_N$} & \cellcolor{gray!5}{40.38{\tiny $\pm$0.08}} & \cellcolor{gray!10}{41.10{\tiny $\pm$0.11}} & 40.30{\tiny $\pm$0.16} \\
        \bottomrule
\end{tabular}
\vspace{-5pt}
\end{wraptable}

In this experiment, we assess the combined effect of spatial and temporal graph rewiring in alleviating spatiotemporal over-squashing. We adopt \textsc{FoSR} \citep{karhadkar2023fosr} as the graph rewiring method and evaluate the forecasting error with and without row-normalized convolutions. We test the models on \gls{engrad} for its symmetric topology, which makes rewiring more meaningful compared to traffic forecasting tasks, where spatial structure is rigidly defined by the underlying road network. Besides our original \gls{mptcn} implementation relying on DCNN as $\mpl$ layer, we further consider RGCN \citep{schlichtkrull2018modeling}, to weight differently the contribution of rewired edges. We report results in the \gls{tts} setting in \autoref{tab:spatiotemporal_rewiring}. We can observe that combining both spatial and temporal rewiring yields the best performance in the original implementation using DCNN. In particular, rewiring in each dimension individually improves accuracy, with the temporal one contributing the largest marginal gain. This is consistent with our theoretical analysis, reinforcing that temporal bottlenecks are a significant limiting factor in \glspl{stgnn}.

\section{Related work}
\label{sec:relwork}

The issue of over-squashing in \glspl{gnn} was first highlighted by \citet{alon2021bottleneck}, who showed that \glspl{gnn} struggle to capture long-range dependencies in graphs with structural bottlenecks. Building on this, \citet{topping2022understanding} introduced a sensitivity-based framework to identify over-squashing, which was later extended by \citet{di_giovanni2023over, di_giovanni2024how}. Two main strategies have emerged to alleviate this problem: graph re-wiring to enhance connectivity \cite{gasteiger2019diffusion, topping2022understanding, karhadkar2023fosr, barbero2024locality}, and architectural modifications to stabilize gradients \cite{chen2020measuring, tortorella2022leave, gravina2023anti, gravina2025oversquashing}. Notably, these efforts largely overlook the role of time-evolving node features.
Similar challenges in modeling long-range dependencies have been studied in temporal architectures like \glspl{rnn}, where vanishing and exploding gradients hinder effective learning \cite{bengio1994learning, pascanu2013difficulty}. Solutions include enforcing orthogonality \cite{vorontsov2017orthogonality} or antisymmetry \cite{chang2019antisymmetricrnn} in weight matrices, as well as designing specialized stable architectures~\citep{rusch2021unicornn}. Recently, the attention sink effect~\cite{xiao2023efficient} has revealed a bias in Transformers towards early tokens~\citep{wu2025emergence}, with \citet{barbero2024transformers} linking this phenomenon to over-squashing in language models.
In the spatiotemporal domain, \citet{gao2022equivalence} analyzed the expressive power of \glspl{stgnn}, showing their capabilities and limits in distinguishing non-isomorphic graphs in both \gls{tas} and \gls{tts} settings, while \citet{gravina2024long} propose a framework tailored to long-range tasks in continuous-time dynamic graphs. Yet, to our knowledge, no prior work has directly tackled the problem of spatiotemporal over-squashing in \glspl{stgnn}, where the interaction between spatial and temporal dimensions introduces unique challenges for information propagation.

\section{Conclusions}
\label{sec:conclusions}

In this work, we have formally characterized spatiotemporal over-squashing in \glspl{stgnn}, demonstrating its distinctions from the static case. Our analysis reveals that convolutional \glspl{stgnn} counterintuitively favor information propagation from temporally distant points, offering key insights into their behavior. Despite their structural differences, we proved that both \gls{tas} and \gls{tts} paradigms equally suffer from this phenomenon, providing theoretical justification for computationally efficient implementations. Experiments on synthetic and real-world datasets confirm our theoretical framework's practical relevance.
The insights from this study directly impact the design of effective spatiotemporal architectures. By bridging theory and practice, we contribute to a deeper understanding of \glspl{stgnn} and provide principled guidance for their implementation.
\vspace{-.73em}
\paragraph{Limitations and future work}
We focus on factorized convolutional \glspl{stgnn}, aligning with established \gls{gnn} research while extending existing theoretical results to the spatiotemporal domain. This choice allows for clean theoretical bounds and a controlled setting applicable to both \gls{tts} and \gls{tas} variants. However, it also limits the generality of our results to models where cross-dimensional interactions are blocked. Extending our framework to models with joint space–time filters or recurrent \glspl{stgnn}, which follow fundamentally different propagation dynamics, represents a valuable direction for future work.
Finally, our sensitivity bounds are derived in the worst case and are therefore potentially conservative; deriving tighter, data-dependent estimates and conducting a systematic analysis of mitigation strategies remain promising directions for future research.

\begin{ack}
The authors wish to thank Francesco Di Giovanni for the valuable feedback and collaboration during the initial phase of this research. M.B.\ and J.B.\ are partially supported by the EPSRC Turing AI World-Leading Research Fellowship No. EP/X040062/1 and EPSRC AI Hub No. EP/Y028872/1. C.A.\ is partly supported by the Swiss National Science Foundation under grant no. 204061 \emph{HORD GNN: Higher-Order Relations and Dynamics in Graph Neural Networks} and the International Partnership Program of the Chinese Academy of Sciences under Grant 104GJHZ2022013GC.
\end{ack}

\bibliography{bibliography}
\bibliographystyle{unsrtnat}

\clearpage
\appendix
\onecolumn
\section*{Appendix}
\section{Proofs}
\label{app:proofs}

This appendix gathers the complete proofs of all theoretical results showcased in the main text. For clarity and ease of reference, we restate each proposition or theorem before providing its corresponding proof. We begin with the results pertaining to \glspl{tcn}, followed by those related to the sensitivity analysis in \glspl{mptcn}.

\subsection{Sensitivity bound of \glspl{tcn}}
\label{app:tempsqu_proof}

\autoref{thm:tcnbound} establishes the sensitivity bound for a \gls{tcn} composed by \(L_{\mathsf T}\) stacked causal convolutional layers $\tcnl$. The resulting inequality factorises into a model-dependent term \((c_\sigma\mathsf w)^{L_{\mathsf T}}\) and a term \(\rmR^{L_{\mathsf T}}\) dependent instead on the temporal topology. The proof proceeds by induction on the number of layers, bootstrapping from the single–layer Jacobian estimate we provide in \autoref{lem:singletcn}.

\begin{lemma}[Single $\tcnl$ layer]\label{lem:singletcn} Consider a $\tcnl$ layer as in \autoref{eq:tcn_ref} with kernel size $P$, and assume that $\norm{\mW_{p}} \leq  \mathsf{w}$ for all $p < P$, and that $|\sigma^\prime| \leq c_{\sigma}$. For each $i, j \in [0, T)$, the following holds:
  \[
    \norm{\frac{\partial \vh^{(1)}_{t \shortminus j}}{\partial \vh^{(0)}_{t \shortminus i}}} \leq c_\sigma \mathsf{w} \left(\rmR\right)_{ij} .
  \]
\end{lemma}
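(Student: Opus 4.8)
\textbf{Proof plan for Lemma~\ref{lem:singletcn}.} The plan is to unpack the single-layer map in Equation~\ref{eq:tcn_ref} into an explicit weighted sum over time shifts, differentiate it term by term, and then apply submultiplicativity of the spectral norm together with the assumed bounds on the $\mW_p^{(1)}$ and on $\sigma^\prime$. This lemma is the base case for the induction proving Theorem~\ref{thm:tcnbound}.

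First I would rewrite the layer in convolutional form. Since $\diag_p(\rmR)$ retains only the $p$-th lower diagonal of $\rmR$, and $\rmR$ is lower-triangular Toeplitz banded with $(\rmR)_{ab}=r_{a-b}$ for $0\le a-b<P$, a short index computation shows that the $j$-th row of $\diag_p(\rmR)^\top\vh^{(0)}_{t\shortminus T:t}$ equals $r_p\,\vh^{(0)}_{t\shortminus(j+p)}$, where $r_p=0$ for $p\ge P$ and out-of-window rows are treated as zero padding, exactly as in a causal convolution. Substituting this into Equation~\ref{eq:tcn_ref} gives the pointwise identity
\[
\vh^{(1)}_{t\shortminus j} = \sigma\!\Big(\textstyle\sum_{p=0}^{P-1} r_p\,\vh^{(0)}_{t\shortminus(j+p)}\,\mW_p^{(1)}\Big).
\]

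Next I would differentiate. In the sum on the right, the only summand depending on $\vh^{(0)}_{t\shortminus i}$ is the one with $j+p=i$, i.e.\ $p=i-j$, which is an admissible kernel index precisely when $0\le i-j<P$ — exactly the support of $(\rmR)_{ij}$. By the chain rule, for $0\le i-j<P$,
\[
\frac{\partial \vh^{(1)}_{t\shortminus j}}{\partial \vh^{(0)}_{t\shortminus i}} = \diag\!\big(\sigma^\prime(\vz_j)\big)\, r_{i-j}\, \big(\mW_{i-j}^{(1)}\big)^{\!\top}, \qquad \vz_j := \textstyle\sum_{p} r_p\,\vh^{(0)}_{t\shortminus(j+p)}\mW_p^{(1)},
\]
while the Jacobian is $\vzero$ whenever $i<j$ or $i-j\ge P$. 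Taking spectral norms and using submultiplicativity, the operator-norm bound $\big\|\diag(\sigma^\prime(\vz_j))\big\|\le c_\sigma$ (the spectral norm of a diagonal matrix is its largest entry in absolute value), norm-invariance under transposition, and $\big\|\mW_{i-j}^{(1)}\big\|\le\mathsf w$, give $\big\|\partial\vh^{(1)}_{t\shortminus j}/\partial\vh^{(0)}_{t\shortminus i}\big\|\le c_\sigma\,\mathsf w\,|r_{i-j}|$ on the support and $0$ off it. Since the entries of $\rmR$ are nonnegative (indeed in $\{0,1\}$), we have $|r_{i-j}|=(\rmR)_{ij}$ on the support and $(\rmR)_{ij}=0$ off it, so in all cases $\big\|\partial\vh^{(1)}_{t\shortminus j}/\partial\vh^{(0)}_{t\shortminus i}\big\|\le c_\sigma\mathsf w\,(\rmR)_{ij}$, as claimed.

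The only delicate point is the bookkeeping: getting the shift index right in the identity for $\diag_p(\rmR)^\top\vh^{(0)}_{t\shortminus T:t}$, being consistent about the row-vector convention so that the Jacobian of $\vh\mapsto\vh\mW$ is $\mW^\top$ (whose spectral norm equals $\|\mW\|$), and observing that off-support entries of $\rmR$ vanish so the trivial inequality $0\le 0$ closes those cases. Everything else is a direct application of the chain rule and submultiplicativity of the operator norm.
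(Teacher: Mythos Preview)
Your proof is correct and follows essentially the same approach as the paper: both expand the $\tcnl$ layer into a sum over shifts, compute the pre-activation Jacobian as $r_{i-j}$ times the weight matrix on the support of $\rmR$, apply the chain rule through the pointwise activation, and then bound via submultiplicativity of the spectral norm. Your version is slightly more explicit about the row-vector convention (writing $(\mW_{i-j}^{(1)})^{\top}$ rather than $\mW_{i-j}$), but this is a notational nicety that does not change the argument.
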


\begin{proof}
  Let $\tilde{\vh}^{(1)}_{t}$ be the pre-activation output of the $\tcnl$ layer, such that $\vh^{(1)}_{t} = \sigma \big( \tilde{\vh}^{(1)}_{t} \big)$. Since $\rmR$ is lower-triangular and Toeplitz and has lower bandwidth $P-1$, i.e., \( \left(\rmR\right)_{ij} = r_{i-j} \) for \( 0 \le i-j < P \), for indices $i, j \in [0, T)$, we have
  \[
    \frac{\partial \tilde{\vh}^{(1)}_{t\shortminus j}}%
    {\partial \vh^{(0)}_{t\shortminus i}} =
    \begin{cases}
      r_{i-j} \mW_{i-j},
         & 0 \le i-j < P,    \\
      0, & \text{otherwise}.
    \end{cases}
  \]
  Applying the chain rule through the pointwise non-linearity $\sigma$
  gives
  \[
    \frac{\partial \vh^{(1)}_{t\shortminus j}}%
    {\partial \vh^{(0)}_{t\shortminus i}}
    =
    \diag \Big(
    \sigma^\prime\Big(\tilde{\vh}^{(1)}_{t\shortminus j}\Big)
    \Big)
    \frac{\partial \tilde{\vh}^{(1)}_{t\shortminus j}}%
    {\partial \vh^{(0)}_{t\shortminus i}}.
  \]
  Applying the sub-multiplicative property of the spectral norm together with the bounds $\lvert\sigma^\prime\rvert\le c_{\sigma}$ and $\norm{\mW_{p}} \le\mathsf{w}$ yields
  \begin{align*}
    \norm{\frac{\partial \vh^{(1)}_{t\shortminus j}}{\partial \vh^{(0)}_{t\shortminus i}}} & \le \norm{\diag \Big(\sigma^\prime\Big(\tilde{\vh}^{(1)}_{t\shortminus j}\Big)\Big)} \norm{\frac{\partial \tilde{\vh}^{(1)}_{t\shortminus j}}{\partial \vh^{(0)}_{t\shortminus i}}} \\
    & \le c_{\sigma} \mathsf{w} |r_{i-j}| \\
    &= c_{\sigma} \mathsf{w} \left(\rmR\right)_{ij},
  \end{align*}
  which proves the theorem.
\end{proof}

We now use the result in \autoref{lem:singletcn} to prove the bound of \autoref{thm:tcnbound} for a \gls{tcn} obtained by stacking $L_{\mathsf{T}}$ $\tcnl$ layers.

\tcnbound*

\begin{proof}
  We fix the number of stacked $\tcnl$ layers to be $l = L_{\mathsf{T}}$ and prove the bound by induction on $l$. For the base case $l=1$, \autoref{lem:singletcn} gives
  \begin{equation}
    \norm{\frac{\partial \vh^{(1)}_{t \shortminus j}}{\partial \vh^{(0)}_{t \shortminus i}}} \le c_\sigma\mathsf{w}(\rmR)_{ij} = (c_\sigma \mathsf{w})^1(\rmR^{1})_{ij},
    \label{eq:thm_tcnbound_1}
    \tag{\ref*{thm:tcnbound}.1}
  \end{equation}
  so the claim holds for one layer. We assume the bound is true for $l-1$:
  \begin{equation}
    \norm{\frac{\partial \vh^{(l \shortminus 1)}_{t \shortminus k}}{\partial \vh^{(0)}_{t \shortminus i}}} \le (c_\sigma\mathsf w)^{l \shortminus 1}(\rmR^{l \shortminus 1})_{ik} \qquad \forall\, i,k.
    \label{eq:thm_tcnbound_ih}
    \tag{\ref*{thm:tcnbound}.2}
  \end{equation}

  For $l$ layers, the chain rule gives
  \[
    \norm{\frac{\partial \vh^{(l)}_{t \shortminus j}}{\partial \vh^{(0)}_{t \shortminus i}}} =
    \norm{
      \sum_{k=0}^{T-1}
      \frac{\partial \vh^{(l)}_{t \shortminus j}}{\partial \vh^{(l \shortminus 1)}_{t \shortminus k}}
      \frac{\partial \vh^{(l \shortminus 1)}_{t \shortminus k}}{\partial \vh^{(0)}_{t \shortminus i}}
    }
    \le
    \sum_{k=0}^{T-1}
    \norm{
      \frac{\partial \vh^{(l)}_{t \shortminus j}}{\partial \vh^{(l \shortminus 1)}_{t \shortminus k}}
    }
    \norm{\frac{\partial \vh^{(l \shortminus 1)}_{t \shortminus k}}{\partial \vh^{(0)}_{t \shortminus i}}},
  \]
  where the first factor is the base case of a single layer \eqref{eq:thm_tcnbound_1}, and the second one is the induction hypothesis \eqref{eq:thm_tcnbound_ih}. Substituting these bounds gives
  \begin{align*}
    \norm{\frac{\partial \vh^{(l)}_{t \shortminus j}}{\partial \vh^{(0)}_{t \shortminus i}}} %
     & \le \sum_{k=0}^{T-1} \left( (c_\sigma\mathsf w) (\rmR)_{kj} (c_\sigma\mathsf w)^{l \shortminus 1} (\rmR^{l \shortminus 1})_{ik} \right) \\ %
     & = (c_\sigma\mathsf w)^{l} \sum_{k=0}^{T-1} (\rmR^{l \shortminus 1})_{ik} (\rmR)_{kj} \\ %
     & = (c_\sigma\mathsf w)^{l}(\rmR^{l})_{ij} ,
  \end{align*}
  which proves the induction.
\end{proof}

\subsection{Asymptotic of $\rmR$ and sensitivity in \acrshortpl{mptcn}}
\label{app:mptcn_proof}

In the following, we provide a detailed analysis of the asymptotic behavior of the temporal topology matrix, which characterizes information propagation across time steps in both \glspl{tcn} and \glspl{mptcn}. Afterwards, we provide the sensitivity bound and related proof for the latter.

\subsubsection{Asymptotic behavior of $\rmR$ and $\rmR_N$}

\tempsqu*

\begin{proof}
  Write $\rmR = r_0 \rmI + \rmN$, where $\rmN := \rmR-r_0\rmI$ is strictly lower-triangular. Because $\rmI$ and $\rmN$ commute, the binomial expansion for commuting matrices gives
  \[
    \rmR^{l}
    \;=\;
    r_0^{l}\rmI
    \;+\;
    \sum_{k=1}^{T-1} \binom{l}{k} r_0^{l-k}\rmN^{k},
    \qquad l \in \sN,
  \]
  the sum truncating at $T-1$ since $\rmN^{T}= \vzero$. For a fixed row index $i \ge 1$, the strictly lower-triangular structure implies $(\rmN^{k})_{i0}=0$ whenever $k>i$, thus
  \[
    (\rmR^{l})_{i0}
    \;=\;
    \sum_{k=1}^{i} \binom{l}{k} r_0^{l-k} (\rmN^{k})_{i0}.
  \]
  With $k$ fixed, $\binom{l}{k} \sim l^{k}/k!$ as $l\to\infty$ since $\lim_{l\rightarrow \infty } \frac{\binom{l}{k}}{l^k} = \frac{1}{k!}$, so with some abuse of notation
  \[
    (\rmR^l)_{i0} \sim \sum_{k=1}^{i} \frac{l^k}{k!} r_0^{l-k} (\rmN^k)_{i0} = r_0^{l} \sum_{k=1}^{i} \frac{l^k}{k! r_0^k} (\rmN^k)_{i0} ,
    \qquad l\to\infty .
  \]

    Note that it is the product of $r_0^l$ and a degree $i$ polynomial in $l$, therefore in order to see the behaviour as $l\rightarrow \infty$ it suffices to study the leading term $\frac{l^i}{i!r_0^i} (\rmN^i)_{i0}$ and the factor of $r_0^l$. Additionally, due to the strictly lower triangular structure of $\rmN$, we have $(\rmN^i)_{i0} = r_1^i$. This follows from the fact that, in a directed acyclic graph as the one described by $\rmN$, there exists a unique directed path of length $i$ from the $i$-th node to the sink node $0$. Each edge along this path contributes a multiplicative factor of $r_1$, resulting in a total weight of $r_1^i$ for the path.
    Applying the same reasoning with $i$ replaced by $j<i$, we can study the ratio
  \[
    \left|\frac{(\rmR^{l})_{j0}}{(\rmR^{l})_{i0}}\right|
    \sim
    \frac{\frac{l^j}{j!}r_0^{l-j}r_1^j}{\frac{l^i}{i!}r_0^{l-i}r_1^i}
    =
    \frac{1}{l^{i-j}} \frac{i! r_0^{i-j}}{j! r_1^{i-j}}
    =
    \mathcal{O}\left( \frac{1}{l^{i-j}} \right)
    \xrightarrow[l\to\infty]{} 0,
  \]
  which proves the proposition.
\end{proof}

The proposition that follows shows that the powers \(\rmR_{N}^{k}\) converge to the rank–one matrix \(\vone\ve_{0}^{\top}\). In probabilistic terms, the normalization turns the temporal topology into an absorbing Markov chain whose unique absorbing state is the last time step. Consequently, the influence of any time step $t-i$ concentrates on $t$ as the depth grows, eventually reaching a uniform sensitivity of the last time step over the entire sequence. \autoref{fig:tcn_sensitivity} shows a comparison between the sensitivity curves as a function of the (backward) temporal distance from the last time step for increasingly deeper \glspl{tcn} in the case of standard (a) and normalized (b) convolutions.

\begin{figure}[t]
    \centering
    \includegraphics[width=\linewidth]{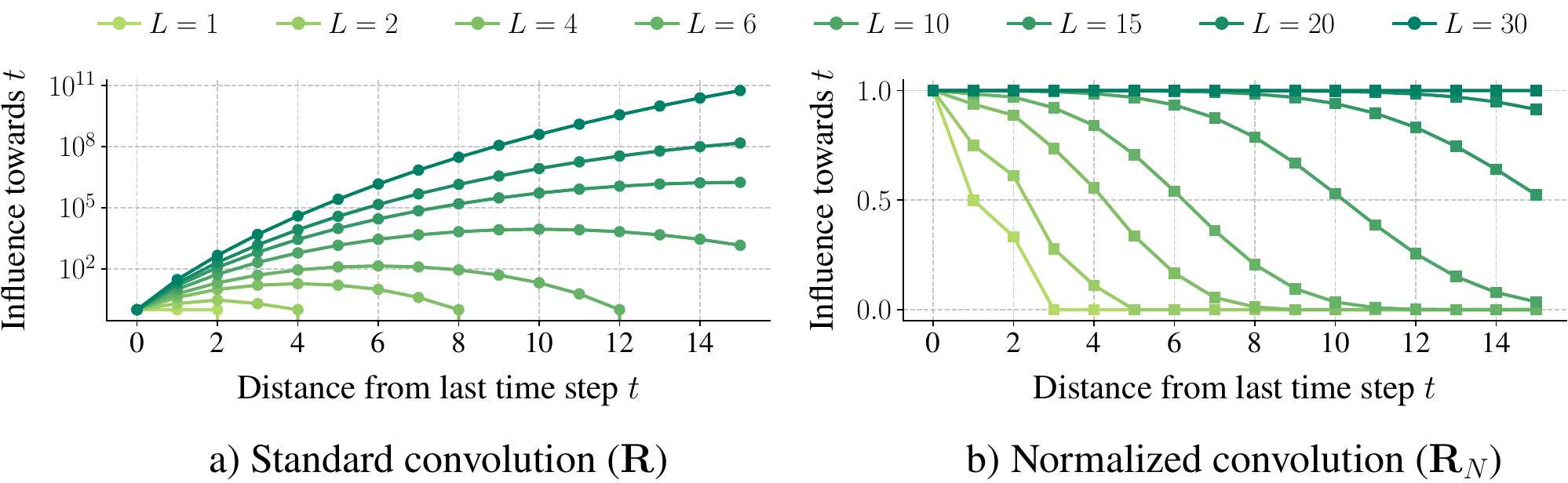}
    \caption{Sensitivity of last-layer representations associated with last time step $t$ to earlier ones in \glspl{tcn} with $L$ layers and kernel size $P=3$. The values correspond to entries $\left(\rmR^L\right)_{i0}$ for the standard convolution (a) and $\left(\rmR_N^L\right)_{i0}$ for the normalized convolution (b), with $i \ge 0$ being the backward distance from $t$. As depth increases, the standard convolution favors information from earlier steps, while the normalized version asymptotically approaches uniform sensitivity across all steps.}
    \label{fig:tcn_sensitivity}
\end{figure}

\begin{proposition}
  \label{prop:rn_asympt}
  Let $\rmR\in\sR_{\scalemath{.65}{\scalemath{.85}{\geq} 0}}^{T\times T}$ be a positive, real, lower–triangular Toeplitz band matrix with lower bandwidth $P-1$, i.e., with \( \left(\rmR\right)_{ij} = r_{i-j} \) for \( 0 \le i-j < P \), and let $P\geq 2$, $r_1\neq 0$, and $r_0\neq 0$. We define $\rmR_{N}:=\diag(\rmR\vone)^{-1}\rmR$ to be the row–normalized matrix $\rmR$ ($\vone$ is the all–ones vector). Then
  \[
    \lim_{k\to\infty} \rmR_{N}^{k} = \vone \ve_{0}^{\top},
  \]
  i.e. every row of $\rmR_{N}^{k}$ converges to $\ve_{0}^{\top}=(1,0,\dots,0)$.
\end{proposition}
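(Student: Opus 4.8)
The plan is to combine the block-triangular structure of $\rmR_N$ with the fact that row normalization makes it row-stochastic, so that it is the transition matrix of an absorbing Markov chain whose unique absorbing state is the last time step. First I would record three elementary facts. (i) Each row sum of $\rmR$ equals $\sum_{k=0}^{\min(i,P-1)} r_k \ge r_0 > 0$, so $\rmR_N$ is well defined and $\rmR_N\vone=\vone$. (ii) $\rmR_N$ is lower-triangular, inheriting the band structure of $\rmR$. (iii) Row $0$ of $\rmR$ is $(r_0,0,\dots,0)$, hence row $0$ of $\rmR_N$ is exactly $\ve_0^\top$.

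Next I would split off the first coordinate and write
\[
  \rmR_N=\begin{pmatrix} 1 & \vzero^\top\\ \vr & \rmQ\end{pmatrix},
\]
where $\rmQ\in\sR^{(T-1)\times(T-1)}$ is the principal submatrix of $\rmR_N$ on indices $1,\dots,T-1$ and $\vr$ collects the corresponding entries of the first column. Since $\rmQ$ is lower-triangular its eigenvalues are its diagonal entries $(\rmR_N)_{ii}=r_0/\sum_{k=0}^{\min(i,P-1)}r_k$ for $i\ge 1$; because $P\ge 2$ and $r_1\neq 0$ (and all entries are nonnegative) the denominator strictly exceeds $r_0$, so every such entry lies in $(0,1)$ and therefore $\rho(\rmQ)<1$. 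A one-line induction on $k$ then gives
\[
  \rmR_N^{k}=\begin{pmatrix} 1 & \vzero^\top\\ \big(\sum_{j=0}^{k-1}\rmQ^{j}\big)\vr & \rmQ^{k}\end{pmatrix},
\]
and $\rho(\rmQ)<1$ forces $\rmQ^{k}\to\vzero$ and $\sum_{j=0}^{k-1}\rmQ^{j}\to(\rmI-\rmQ)^{-1}$ as $k\to\infty$.

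Finally I would identify the limiting first column. Reading the last $T-1$ coordinates of $\rmR_N\vone=\vone$ gives $\vr+\rmQ\vone=\vone$, i.e. $\vr=(\rmI-\rmQ)\vone$, so $(\rmI-\rmQ)^{-1}\vr=\vone$. Hence $\rmR_N^{k}$ converges to the matrix whose first row and first column are all ones and every other entry is $0$, which is precisely $\vone\,\ve_0^\top$. I do not expect a genuine obstacle here; the only point requiring care is the strict spectral bound $\rho(\rmQ)<1$, which rests entirely on $r_1\neq 0$ — without a strictly downward transition $\rmR_N$ could equal the identity and the statement would fail. One could equivalently phrase the argument probabilistically: states $1,\dots,T-1$ are transient because the path $i\to i-1\to\cdots\to 0$ carries positive probability, and standard absorbing-chain theory yields the same limit $\vone\ve_0^\top$.
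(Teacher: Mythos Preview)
Your argument is correct. The block decomposition of $\rmR_N$, the observation that the lower-triangular submatrix $\rmQ$ has diagonal entries strictly below $1$ (hence $\rho(\rmQ)<1$), the explicit formula for $\rmR_N^k$, and the identification $(\rmI-\rmQ)^{-1}\vr=\vone$ via row-stochasticity are all valid and give the claimed limit.

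The paper takes a different, more high-level route: it observes that $\rmR_N$ is the transition matrix of a finite Markov chain with a single absorbing state $0$, checks that every other state is transient because $r_1>0$ guarantees a one-step move toward $0$, and then invokes the standard absorption theorem (Theorem~11.3 in Grinstead--Snell) to conclude $\rmR_N^k\to\vone\ve_0^\top$. Your linear-algebraic proof is essentially an explicit, self-contained derivation of that absorption theorem in this special case: the block structure you exploit is exactly the canonical form of an absorbing chain, and $\rho(\rmQ)<1$ is the linear-algebraic translation of ``all non-absorbing states are transient.'' The paper's version is shorter but relies on an external reference; yours is longer but elementary and makes the convergence mechanism transparent. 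You even note the probabilistic phrasing at the end, so you have effectively covered both viewpoints.
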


\begin{proof}
  Since $\rmR_{N}$ is a stochastic matrix (every row sum to $1$ by construction), we can interpret $\rmR_{N}$ as the transition matrix of a Markov chain on the finite state space $\{0,1,\dots,T-1\}$. Because $\rmR$ is lower-triangular, we have that state $0$ is \emph{absorbing}, i.e.
  \(
    (\rmR_{N})_{00} = r_{0}/r_{0} = 1
  \),
  hence once the chain reaches $0$ it never leaves. Moreover, since $r_{1}\neq0$, for every $i>0$ and $j=i-1$ there is a positive probability of moving one step closer to $0$, i.e.,
  \[
    (\rmR_{N})_{ij} = \frac{r_{1}}{\sum_{p=0}^{\min\{P \shortminus 1,i\}}r_{p}} > 0 ,
  \]
  hence all states $1,2,\dots,T-1$ are \emph{transient}. For a finite absorbing Markov chain with a single absorbing state $\{0\}$, every state eventually reaches the absorbing state $0$ with probability $1$ (Theorem 11.3 from \cite{grinstead2006grinstead}). Thus,
  \[
    \lim_{k\to\infty} \rmR_{N}^{k} = \vone \ve_{0}^{\top} ,
  \]
  where the $i$-th row of the limit contains the absorption probabilities starting from state $i$ and are all equal to $1$ for state $0$ and to $0$ for the others.
\end{proof}

\subsubsection{Sensitivity bounds of \acrshortpl{mptcn}}
In this subsection, we aim to prove the sensitivity bounds of a \gls{mptcn} obtained by stacking $L$ $\stmpl$ layers defined as in \autorefseq{eq:tmp_disjoint}{eq:mp_disjoint}, where the temporal processing $\tmpl$ takes the form of a temporal causal convolution, denoted by $\tcnl$, as defined in \autoref{eq:tcn_ref}.

In line with previous work by~\citet{di_giovanni2024how}, for spatial processing, we consider as $\mpl$ the following family of \glspl{mpnn}:
\begin{equation}
    \label{eq:mp_ref}
    \vh^{v (l)} = \xi\left(\Theta_{\mathsf{U}}^{(l)} \vh^{v(l \shortminus 1)} + \Theta_{\mathsf{M}}^{(l)} \sum\nolimits_{u \in \gN(v)} \gso^{uv} \phi^{(l)} \left(\vh^{v(l \shortminus 1)}, \vh^{u(l \shortminus 1)}\right) \right)
\end{equation}
where $\Theta_{\mathsf{U}}^{(l)}, \Theta_{\mathsf{M}}^{(l)} \in \sR^{d \times d}$ are matrices of learnable weights, $\phi^{(l)}$ is a $\gC^1$ function, and $\xi$ is a pointwise nonlinear activation function. This class includes common \glspl{mpnn}, such as GCN~\citep{kipf2017semisupervised}, DCNN~\citep{atwood2016diffusion},
GIN~\citep{xu2019powerful},
and GatedGCN~\citep{bresson2017residual}. For this class of function, we make the following assumptions:

\begin{assumption}
  \label{ass:mpnn}
  Given an \gls{mpnn} with $L$ layers, each as in \autoref{eq:mp_ref}, we assume for each layer $l$ that $|\xi'|\leq c_{\xi}$,
  $\norm{\Theta_{\mathsf{U}}^{(l)}} \leq \theta_{\mathsf{u}}$, and $\norm{\Theta_{\mathsf{M}}^{(l)}}\leq \theta_{\mathsf{m}}$. We further assume the Jacobian of the message function $\phi^{(l)}$ w.r.t.\ the target ($v$) and neighbor ($u$) node features to be bounded as $\norm{\partial \phi^{(l)} / \partial \vh^{v(l \shortminus 1)}}\leq c_{1}$ and $\norm{\partial \phi^{(l)} / \partial \vh^{u(l \shortminus 1)}}\leq c_{2}$.
\end{assumption}

Given an \gls{mpnn} as defined in \autoref{eq:mp_ref} and for which \autoref{ass:mpnn} holds, \citet{di_giovanni2024how} established the following sensitivity bound.

\mpnnspec*

As done in the previous section to prove the bound for \glspl{tcn}, we proceed by induction on the number of layers $L$, and start the analysis by establishing the bound for a single-layer \gls{mptcn} in \autoref{thm:mptcn_tts}. The proof takes advantage of the result for \glspl{tcn} we demonstrated in the previous section \autorefp{thm:tcnbound} and  \autoref{thm:mpnnspec} by \citet{di_giovanni2024how} for \glspl{mpnn}.

\begin{lemma}[Sensitivity bound \gls{tts} \gls{mptcn}]
  \label{thm:mptcn_tts}%
  Consider a \gls{tts} \acrshort{mptcn} ($L=1$) with $L_{\mathsf{T}}$ temporal ($\tmpl$) layers and $L_{\mathsf{S}}$ spatial ($\mpl$) as defined in \autorefseq{eq:tmp_disjoint}{eq:mp_disjoint}. Assume that each $\tmpl$ layer satisfies the conditions of \autoref{thm:tcnbound}, and each $\mpl$ layer satisfies the assumptions in \autoref{thm:mpnnspec}. Then, for any $v, u \in \gV$ and $i,j \in [0, T)$, the following holds:
  \[
    \norm{\frac{\partial \vh^{v(1)}_{t \shortminus j}}{\partial \vh^{u(0)}_{t \shortminus i}}} \leq %
    \underbrace{\left(c_{\xi} \theta_{\mathsf{m}}\right)^{L_{\mathsf{S}}} \left(\rmS^{L_{\mathsf{S}}}\right)_{uv}}_{\mathrm{space}} %
    \underbrace{\left(c_\sigma  \mathsf{w} \right)^{L_{\mathsf{T}}}\left(\rmR^{L_{\mathsf{T}}}\right)_{ij}}_{\mathrm{time}} .
  \]
\end{lemma}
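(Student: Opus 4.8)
The plan is to exploit the fully decoupled structure of a \gls{tts} layer: within it, temporal convolutions act node-wise and spatial message passing acts time-step-wise, so the single-layer Jacobian factorizes into a ``time'' block and a ``space'' block, each of which is already controlled by a bound proved earlier in the paper. Concretely, I would first invoke the space--time factorization of \autoref{thm:stgnn_fact_jac} with $l=1$,
\[
    \frac{\partial \vh^{v(1)}_{t \shortminus j}}{\partial \vh^{u(0)}_{t \shortminus i}}
    = \frac{\partial \vh^{v(1)}_{t \shortminus j}}{\partial \vz^{u(1)}_{t \shortminus j}}\;
      \frac{\partial \vz^{u(1)}_{t \shortminus j}}{\partial \vh^{u(0)}_{t \shortminus i}} ,
\]
which holds because, by \autoref{eq:mp_disjoint}, $\vh^{v(1)}_{t \shortminus j}$ depends on the intermediate sequence $\mZ^{(1)}$ only through its slice $\mZ^{(1)}_{t \shortminus j}$ at time $t-j$, while by \autoref{eq:tmp_disjoint}, $\vz^{u(1)}_{t \shortminus j}$ depends on $\mH^{(0)}$ only through node $u$'s own history $\vh^{u(0)}_{t \shortminus T:t}$; every other term in the chain rule therefore vanishes. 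Sub-multiplicativity of the spectral norm then gives
\[
    \norm{\frac{\partial \vh^{v(1)}_{t \shortminus j}}{\partial \vh^{u(0)}_{t \shortminus i}}}
    \le
    \norm{\frac{\partial \vh^{v(1)}_{t \shortminus j}}{\partial \vz^{u(1)}_{t \shortminus j}}}\;
    \norm{\frac{\partial \vz^{u(1)}_{t \shortminus j}}{\partial \vh^{u(0)}_{t \shortminus i}}} .
\]

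Next I would bound the two factors separately using the results already available. For a fixed time step $t-j$, the map $\mZ^{(1)}_{t \shortminus j} \mapsto \vh^{v(1)}_{t \shortminus j}$ is exactly an $L_{\mathsf S}$-layer \gls{mpnn} of the form \autoref{eq:mp_ref} on the spatial graph $\Gso$, and each of its layers inherits \autoref{ass:mpnn} by hypothesis, so \autoref{thm:mpnnspec} applied with $L = L_{\mathsf S}$ yields $\norm{\partial \vh^{v(1)}_{t \shortminus j}/\partial \vz^{u(1)}_{t \shortminus j}} \le (c_{\xi}\theta_{\mathsf m})^{L_{\mathsf S}}(\rmS^{L_{\mathsf S}})_{uv}$. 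Symmetrically, for a fixed node $u$, the map $\vh^{u(0)}_{t \shortminus T:t} \mapsto \vz^{u(1)}_{t \shortminus T:t}$ is an $L_{\mathsf T}$-layer \gls{tcn} as in \autoref{eq:tcn_ref} satisfying the hypotheses of \autoref{thm:tcnbound}, so $\norm{\partial \vz^{u(1)}_{t \shortminus j}/\partial \vh^{u(0)}_{t \shortminus i}} \le (c_\sigma\mathsf{w})^{L_{\mathsf T}}(\rmR^{L_{\mathsf T}})_{ij}$. Substituting both estimates into the displayed inequality and grouping the model-dependent constants gives the stated bound.

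I do not expect a genuinely hard step here: the whole argument rests on recognizing that a \gls{tts} layer is literally the composition of a node-wise temporal stack with a time-wise spatial stack and carries no cross-dimensional coupling, so the two previously established bounds compose multiplicatively. The only points requiring care are (i) checking that the hypotheses of \autoref{thm:mpnnspec} and \autoref{thm:tcnbound} are inherited by, respectively, the per-time-step and per-node restrictions of the layer --- which they are, since weights, activations, and message functions are shared across the collapsed axis --- and (ii) keeping the index bookkeeping straight so that the temporal factor appears as $(\rmR^{L_{\mathsf T}})_{ij}$ and the spatial factor as $(\rmS^{L_{\mathsf S}})_{uv}$.
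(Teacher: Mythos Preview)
Your proposal is correct and follows essentially the same approach as the paper's proof: the paper writes the full chain rule over all intermediate nodes and time steps and then argues that the decoupled structure collapses the double sum to a single product, which is exactly the factorization of \autoref{thm:stgnn_fact_jac} you invoke directly; both then apply sub-multiplicativity and bound the two factors via \autoref{thm:mpnnspec} and \autoref{thm:tcnbound}.
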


\begin{proof}
  Fix \(u,v\in\gV\) and \(i,j\in[0,T)\).
  One $\stmpl$ layer first applies the \(L_{\mathsf T}\)  \(\tmpl\) layers node–wise, and then the
  \(L_{\mathsf S}\) \(\mpl\) layers time‐wise. Thus, for the chain rule, we have
  \begin{equation}
    \label{eq:thm_mptcn_tts}
    \frac{\partial \vh^{v(1)}_{t \shortminus j}}{\partial \vh^{u(0)}_{t \shortminus i}}
    = \sum_{w\in\gV}\sum_{k=0}^{T-1}
    \frac{\partial \vh^{v(1)}_{t \shortminus j}}{\partial \vz^{w(L_{\mathsf T})}_{t \shortminus k}}
    \frac{\partial \vz^{w(L_{\mathsf T})}_{t \shortminus k}}{\partial \vh^{u(0)}_{t \shortminus i}} .
    \tag{\ref*{thm:mptcn_tts}.1}
  \end{equation}

  Every \(\mpl\) layer processes each time step separately, hence
  \(\partial \vh^{v(1)}_{t \shortminus j}/\partial \vz^{w(L_{\mathsf T})}_{t \shortminus k}=0\)
  unless \(k=j\). Similarly, every \(\tmpl\) layer processes each node separately, hence
  \(\partial \vz^{w(L_{\mathsf T})}_{t \shortminus j}/\partial \vh^{u(0)}_{t \shortminus i}=0\)
  unless \(w=u\). Both sums in \autoref{eq:thm_mptcn_tts} therefore collapse, giving
  \[
    \frac{\partial \vh^{v(1)}_{t \shortminus j}}{\partial \vh^{u(0)}_{t \shortminus i}}
    = \frac{\partial \vh^{v(1)}_{t \shortminus j}}
    {\partial \vz^{u(L_{\mathsf T})}_{t \shortminus j}}
    \frac{\partial \vz^{u(L_{\mathsf T})}_{t \shortminus j}}
    {\partial \vh^{u(0)}_{t \shortminus i}} .
  \]

  Using sub-multiplicativity of the spectral norm, and considering the bounds from \autoref{thm:mpnnspec} for $\mpl$ layers and \autoref{thm:tcnbound} for $\tcnl$ layers, we have
  \begin{align*}
    \norm{\frac{\partial \vh^{v(1)}_{t \shortminus j}}{\partial \vh^{u(0)}_{t \shortminus i}}} & \le %
    \norm{\frac{\partial \vh^{v(1)}_{t \shortminus j}}
      {\partial \vz^{u(L_{\mathsf T})}_{t \shortminus j}}}
    \norm{\frac{\partial \vz^{u(L_{\mathsf T})}_{t \shortminus j}}
    {\partial \vh^{u(0)}_{t \shortminus i}}}                                                         \\
                                                                                               & \le
    \left(c_{\xi} \theta_{\mathsf{m}}\right)^{L_{\mathsf{S}}} \left(\rmS^{L_{\mathsf{S}}}\right)_{uv} %
    \left(c_\sigma  \mathsf{w} \right)^{L_{\mathsf{T}}}\left(\rmR^{L_{\mathsf{T}}}\right)_{ij} .
    \tag*{\qedhere}
  \end{align*}
\end{proof}

Building on this result for a single-layer \gls{mptcn}, we extend the sensitivity bound for a \gls{mptcn} of $L$ layers in the following theorem.

\mptcn*

\begin{proof}
  We fix the number of $\stmpl$ layers to be $l = L$ and prove the bound by induction on $l$. The base case where $l=1$ follows directly from \autoref{thm:mptcn_tts}, i.e.,
  \begin{equation}
    \norm{\frac{\partial \vh^{v(1)}_{t \shortminus j}}{\partial \vh^{u(0)}_{t \shortminus i}}} \le
    \left(c_{\xi} \theta_{\mathsf{m}}\right)^{L_{\mathsf{S}}} \big(\rmS^{L_{\mathsf{S}}}\big)_{uv} %
    \left(c_\sigma  \mathsf{w} \right)^{L_{\mathsf{T}}}\big(\rmR^{L_{\mathsf{T}}}\big)_{ij}
    \label{eq:thm_mptcn_1}
    \tag{\ref*{thm:mptcn}.1}
  \end{equation}

  We assume the bound holds for \(l-1\) stacked \(\stmpl\) blocks, for all $u,w \in \gV$ and $i,k \in [0, T)$:
  \begin{equation}
    \norm{
      \frac{\partial \vh^{w(l \shortminus 1)}_{t\shortminus k}}
      {\partial \vh^{u(0)}_{t\shortminus i}}
    }
    \le
    \left(c_{\xi} \theta_{\mathsf{m}}\right)^{(l \shortminus 1)L_{\mathsf{S}}}
    \left(c_\sigma  \mathsf{w} \right)^{(l \shortminus 1)L_{\mathsf{T}}}
    \big(\rmS^{(l \shortminus 1)L_{\mathsf{S}}}\big)_{uw}
    \big(\rmR^{(l \shortminus 1)L_{\mathsf{T}}}\big)_{ik}.
    \label{eq:thm_mptcn_ih}
    \tag{\ref*{thm:mptcn}.2}
  \end{equation}

  For the \(l\)-th layer, we apply the chain rule, the triangle inequality, and sub-multiplicativity:
  \[
    \norm{
      \frac{\partial \vh^{v(l)}_{t\shortminus j}}
      {\partial \vh^{u(0)}_{t\shortminus i}}
    }
    \le
    \sum_{w\in\gV}\sum_{k=0}^{T-1}
    \norm{
      \frac{\partial \vh^{v(l)}_{t\shortminus j}}
      {\partial \vh^{w(l \shortminus 1)}_{t\shortminus k}}
    }
    \norm{
      \frac{\partial \vh^{w(l \shortminus 1)}_{t\shortminus k}}
      {\partial \vh^{u(0)}_{t\shortminus i}}
    } .
  \]

  The first term is the base case of a single outer layer \eqref{eq:thm_mptcn_1}, and the right term is our induction hypothesis \eqref{eq:thm_mptcn_ih}, which combined with the triangular inequality proves the induction:

  \begin{align*}
    \norm{
      \frac{\partial \vh^{v(l)}_{t\shortminus j}}
      {\partial \vh^{u(0)}_{t\shortminus i}}
    }
     & \le \sum_{w\in\gV}\sum_{k=0}^{T-1} \left(
    \left(c_{\xi} \theta_{\mathsf{m}}\right)^{L_{\mathsf{S}}}
    \left(c_\sigma  \mathsf{w} \right)^{L_{\mathsf{T}}}
    \bigl(\rmS^{L_{\mathsf S}}\bigr)_{w v}
    \bigl(\rmR^{L_{\mathsf T}}\bigr)_{k j}
    \right.                                                                     \\
     & \hphantom{\le \sum_{w\in\gV}\sum_{k=0}^{T-1} \Big(} \left.
    \left(c_{\xi} \theta_{\mathsf{m}}\right)^{(l \shortminus 1)L_{\mathsf{S}}}
    \left(c_\sigma  \mathsf{w} \right)^{(l \shortminus 1)L_{\mathsf{T}}}
    \bigl(\rmS^{(l \shortminus 1)L_{\mathsf S}}\bigr)_{u w}
    \bigl(\rmR^{(l \shortminus 1)L_{\mathsf T}}\bigr)_{i k}\right)              \\
     & =
    \left(c_{\xi} \theta_{\mathsf{m}}\right)^{lL_{\mathsf{S}}}
    \left(c_\sigma  \mathsf{w} \right)^{lL_{\mathsf{T}}}
    \sum_{w\in\gV}\sum_{k=0}^{T-1}
    \bigl(\rmS^{L_{\mathsf S}}\bigr)_{w v}
    \bigl(\rmS^{(l \shortminus 1)L_{\mathsf S}}\bigr)_{u w}
    \bigl(\rmR^{L_{\mathsf T}}\bigr)_{k j}
    \bigl(\rmR^{(l \shortminus 1)L_{\mathsf T}}\bigr)_{i k}                     \\
     & =
    \left(c_{\xi} \theta_{\mathsf{m}}\right)^{lL_{\mathsf{S}}}
    \left(c_\sigma  \mathsf{w} \right)^{lL_{\mathsf{T}}}
    \bigl(\rmS^{(l \shortminus 1)L_{\mathsf S}}\rmS^{L_{\mathsf S}}\bigr)_{u v}
    \bigl(\rmR^{(l \shortminus 1)L_{\mathsf T}}\rmR^{L_{\mathsf T}}\bigr)_{i j} \\
     & =
    \left(c_{\xi} \theta_{\mathsf{m}}\right)^{lL_{\mathsf{S}}}
    \left(c_\sigma  \mathsf{w} \right)^{lL_{\mathsf{T}}}
    \bigl(\rmS^{lL_{\mathsf S}}\bigr)_{u v}
    \bigl(\rmR^{lL_{\mathsf T}}\bigr)_{i j}.
  \end{align*}

  Induction on $l$ establishes the inequality for every $l\le L$; in particular for $l = L$, which is exactly the claimed bound.
\end{proof}

\section{Experiments}
\label{app:experiments}

All numerical simulations are performed on regression tasks. For experiments on real-world data, the goal is $H$-steps-ahead forecasting. Given a window of $T$ past observations, the forecasting task consists of predicting the $H$ future observations at each node, i.e., $\vy^v_t = \vx_{t:t+H}^v$. We consider families of (parametric) models $f_{\vtheta}$ such that
\begin{equation}
  \hat{\vx}_{t:t+H}^v = \left(f_{\vtheta}\left(\mX_{t \shortminus T:t}, \mA, \mT \right)\right)_v , \quad \forall\ v \in \gV,
\end{equation}
where $\vtheta$ is the set of learnable parameters and $\hat{\vx}_{t:t+H}^v$ are the forecasted values at node $v$ for the interval $[t, t+H)$. The parameters are optimized using an element-wise cost function, e.g., the \gls{mse}:
\begin{equation}\label{eq:forecasting_loss}
  \hat{\vtheta} = \argmin_{\vtheta} \frac{1}{NH}\sum_{v \in \gV}\sum_{i=0}^{H-1} \norm{\hat{\vx}_{t+i}^v - \vx_{t+i}^v}_2^2 .
\end{equation}

\paragraph{Software \& Hardware}
\label{sec:software_harware}
All the code used for the experiments has been developed with Python~\cite{rossum2009python} and relies on the following open-source libraries: PyTorch~\cite{paske2019pytorch}; PyTorch Geometric~\cite{fey2019fast}; Torch Spatiotemporal~\cite{Cini_Torch_Spatiotemporal_2022}; PyTorch Lightning~\cite{Falcon_PyTorch_Lightning_2019}; Hydra~\cite{Yadan2019Hydra}; Numpy~\cite{harris2020array}. We relied on Weights \& Biases~\citep{wandb} for tracking and logging experiments. The code to reproduce the experiments is available at \href{https://github.com/marshka/spatiotemporal-oversquashing}{\texttt{github.com/marshka/spatiotemporal-oversquashing}}.

All experiments were conducted on a workstation running Ubuntu 22.04.5 LTS, equipped with two AMD EPYC 7513 CPUs and four NVIDIA RTX A5000 GPUs, each with 24 GB of memory. To accelerate the experimental process, multiple runs were executed in parallel, with shared access to both CPU and GPU resources. This setup introduces variability in execution times, even under identical experimental configurations. On average, experiments involving real-world datasets required approximately $1$ to $2$ hours per run, while synthetic experiments -- when not terminated early due to the early-stopping criterion -- completed within $20$ minutes.

\paragraph{Datasets}
\begin{table*}[b]
\caption{Statistics of the datasets and considered sliding-window parameters.}
\label{tab:datasets}
\setlength{\aboverulesep}{0pt}
\setlength{\belowrulesep}{0pt}
\renewcommand{\arraystretch}{1.25}
\centering
\begin{small}
\begin{tabular}{l|c c c c c||c c}
\toprule
 \bfseries Datasets & \bfseries Type & \bfseries Nodes & \bfseries Edges & \bfseries Time steps & \bfseries Sampling Rate & \bfseries Window & \bfseries Horizon\\
\toprule
\gls{la} & Directed & 207 & 1,515 & 34,272 & 5 minutes & 12 & 12 \\
\gls{bay} & Directed & 325 & 2,369 & 52,128 & 5 minutes & 12 & 12 \\
\gls{engrad} & Undirected & 487 & 2,297 & 26,304 & 1 hour & 24 & 6 \\
\bottomrule
\end{tabular}
\end{small}
\end{table*}

\begin{figure}[t]
     \centering
     \begin{subfigure}[b]{.48\textwidth}
         \centering
         \includegraphics[scale=1.5]{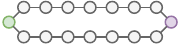}
         \vspace{.4cm}
         \caption{\textsc{Ring}}
         \label{fig:ring_graph}
     \end{subfigure}
     \begin{subfigure}[b]{.48\textwidth}
         \centering
         \includegraphics[scale=1.5]{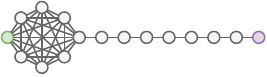}
         \caption{\textsc{Lollipop}}
         \label{fig:lollipop_graph}
     \end{subfigure}
    \caption{\textsc{Ring} and \textsc{Lollipop} graphs used in the synthetic experiments. We highlight in green the target node \protect\tikz\protect\node[vertex=drawiogreen] {};, and show an example of a source node \protect\tikz\protect\node[vertex=drawiopurple] {}; when the spatial distance $k$ is equal to the graph diameter.}
    \label{fig:synthetic graphs}
\end{figure}

We begin by introducing a set of synthetic datasets and tasks specifically designed to highlight the effects of over-squashing in both space and time.
\begin{description}[leftmargin=1em, itemindent=0em]
    \item[\gls{copyfirst}] %
    Each sequence $\vx_{t \shortminus T:t}$ consists of $T=16$ time steps, with values sampled uniformly from the $[0,1]$ interval. The task is to predict the first element in the sequence, i.e., $\vx_{t \shortminus T+1}$. We generate 20,000 sequences for training, 320 for validation, and 500 for testing.
    \item[\gls{copylast}] %
    This task is analogous to \gls{copyfirst}, but the model is required to predict the last value in the sequence, i.e., $\vx_t$. Together with \gls{copyfirst}, these are the datasets used in the experiments in \autoref{sec:ovs_tcn}.
    \item[\gls{rocketman}] %
    This is the dataset used in the synthetic experiments in \autoref{sec:ovs_mptcn}. We generate a graph with a given structure and $N=16$ nodes. At each node, we generate a sequence of $T=9$ time steps, again sampling values uniformly from the $[0,1]$ interval. The task is to predict, for a target node $v$, the average value at time step $t-i$ of nodes located exactly $k$ hops away from $v$. That is, given a spatial distance $k \in [0, D]$, where $D$ is the graph's diameter, we define the source set $\gN_k(v)$ as the nodes at shortest-path distance $k$ from $v$, with $\gN_0(v) = \{v\}$. The label for node $v$ is then $\vy^v_t = \sum_{u \in \gN_k(v)} \frac{\vx^u_{t \shortminus i}}{|\gN_k(v)|}$, while predictions for all other nodes are masked out. We use the same train/validation/test split as in the other synthetic tasks. We use as graphs the \textsc{Ring} and \textsc{Lollipop} graphs illustrated in \autoref{fig:synthetic graphs}. We further show in \autoref{fig:spacetime_app} the performance of an \gls{mptcn} in this dataset with both \gls{tts} (\ref{fig:spacetime_app_tts}) and \gls{tas} (\ref{fig:spacetime_app_tas}) approaches.
\end{description}

We now present the real-world datasets used in the experiments of \autoref{tab:realworld}. 
We split datasets into windows of $T$ time steps, and train the models to predict the next $H$ observations. We closely follow the setup of previous works \citep{marisca2024graph, cini2023taming}. In the following, we report detailed information for experiments on each dataset for completeness.
\begin{description}[leftmargin=1em, itemindent=0em]
    \item[\gls{la} \& \gls{bay}] Both are widely popular benchmarks for graph-based spatiotemporal forecasting. \gls{bay} contains $6$ months of data from $325$ traffic sensors in the San Francisco Bay Area, while \gls{la} contains $4$ months of analogous readings acquired from $207$ detectors in the Los Angeles County Highway~\citep{li2018diffusion}. We use the same setup as previous works~\cite{cini2023taming, wu2019graph, li2018diffusion} for all the preprocessing steps. As such, we normalize the target variable to have zero mean and unit variance on the training set and obtain the adjacency matrix as a thresholded Gaussian kernel on the road distances~\citep{li2018diffusion}. We sequentially split the windows into $70\%/10\%/20\%$ partitions for training, validation, and testing, respectively. We use encodings of the time of day and day of the week as additional input to the model. For \gls{la}, we impute the missing values with the last observed value and include a binary mask as an additional exogenous input.
    Window and horizon lengths are set as $T=12$ and $H=12$. 
    \item[\textbf{\gls{engrad}}] The \gls{engrad} dataset contains hourly measurements of $5$ different weather variables collected at $487$ grid points in England from 2018 to 2020. We use solar radiation as the target variable, while all other weather variables are used as additional inputs, along with encodings of the time of the day and the year. Window and horizon lengths are set as $T=24$ and $H=6$. We scale satellite radiation in the $[0, 1]$ range and normalize temperature values to have a zero mean and unit variance. We do not compute loss and metrics on time steps with zero radiance and follow the protocol of previous work~\citep{marisca2024graph} to obtain the graph and training/validation/testing folds.
\end{description}

\begin{figure}[t]
     \centering
     \begin{subfigure}[t]{\textwidth}
         \centering
        \includegraphics[width=\linewidth]{assets/img/plot_space_vs_time_tts.pdf}
         \caption{\textsc{\gls{tts} \gls{mptcn}}}
         \label{fig:spacetime_app_tts}
         \vspace{.2cm}
     \end{subfigure}
     \begin{subfigure}[t]{\textwidth}
         \centering
        \includegraphics[width=\linewidth]{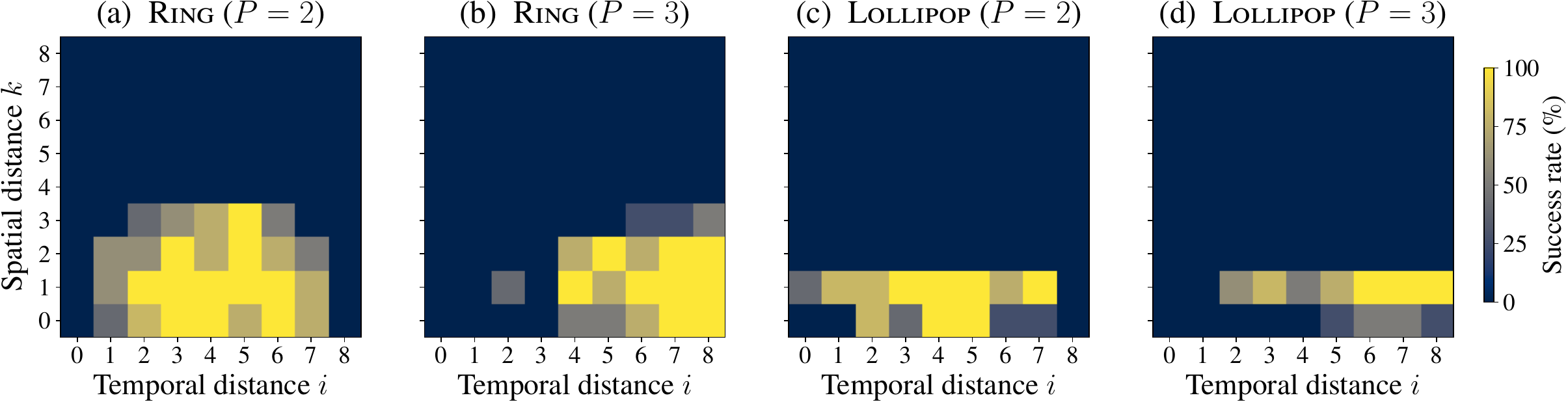}
         \caption{\textsc{\gls{tas} \gls{mptcn}}}
         \label{fig:spacetime_app_tas}
     \end{subfigure}
    \caption{Success rate (\%) of \acrshortpl{mptcn} on the \textsc{RocketMan} dataset. The tasks vary for the type of graph used (\textsc{Ring} or \textsc{Lollipop}) and size of $P$ ($2$ or $3$). The plot axes show the source neighbors distance $k$ and the temporal distance $i$.}
    \label{fig:spacetime_app}
\end{figure}

\paragraph{Training setting} We trained all models using the Adam~\citep{kingma2014adam} optimizer with an initial learning rate of $0.001$, scheduled by a cosine annealing strategy that decays the learning rate to $10^{-6}$ over the full training run. Gradients are clipped to a maximum norm of $5$ to improve stability. For synthetic experiments, we trained for a maximum of $150$ epochs with early stopping if the validation loss did not improve for $30$ consecutive epochs, using mini-batches of size $32$. To reduce computational time, we limit each epoch to the first $400$ randomly sampled batches in the experiments for \autoref{fig:memst}. For experiments on real-world datasets, we used the \gls{mae} as the loss function, trained for up to $200$ epochs with a patience of $50$ epochs, and in each epoch randomly sampled without replacement $300$ mini-batches of size $64$ from the training set.

\paragraph{Baselines}
\label{sec:baselines_appendix}
In the following, we report the hyperparameters used in the experiment for the considered architectures. Whenever possible, we relied on code provided by the authors or available within open-source libraries to implement the baselines.

\begin{description}[leftmargin=1em, itemindent=0em]
    \item[\textbf{\gls{mptcn}}] %
    We use $d=64$ hidden units and the GELU activation function~\citep{hendrycks2016gaussian} throughout all layers. As $\mpl$, we use the Diffusion Convolution operator from \citep{atwood2016diffusion}. For the real-world datasets, we compute messages with different weights from both incoming and outgoing neighbors up to $2$ hops, as done by \citet{li2018diffusion}. As the $\encl$, we upscale the input features through a random (non-trained) semi-orthogonal $d_x \times d$ matrix, such that the norm of the input is preserved. As the $\rol$, we use different linear projections for each time step in the forecasting horizon.
    \item[\textbf{\gls{gwnet}}] We used the same parameters reported in the original paper~\cite{wu2019graph}, except for those controlling the receptive field. Being \gls{gwnet} a convolutional architecture, this was done to ensure that the receptive field covers the whole input sequence.
    In particular, we used $6$ layers with temporal kernel size and dilation of $3$ for \gls{engrad} since the input window has length $24$. For the \gls{tts} implementation, we compute all message-passing operations stacked at the end of temporal processing, without interleaving any dropout or normalization (as done in the temporal part). The final representation is added through a non-trainable skip-connection to the output of temporal processing, to mimic the original \gls{tas} implementation. The two approaches share the same number of trainable parameters.
\end{description}

\section{Computational complexity}
\label{app:compcomp}
In this appendix, we analyze the computational complexity of \glspl{mptcn} with spatiotemporal message passing as defined in \autorefseq{eq:tmp_disjoint}{eq:tcn_ref}. 
A summary of computational complexities is provided in \autoref{tab:complex}, where we omit constant factors and the dependency on feature dimension $d$ for clarity.

\paragraph{Naive approach.}
We begin by analyzing the temporal component in \autoref{eq:tmp_disjoint}, where the $\tmpl$ operator is implemented as a stack of causal convolutional layers, each with kernel size $P$ as specified in \autoref{eq:tcn_ref}. Each temporal layer performs $P$ matrix-vector multiplications and thus incurs a time complexity of $\mathcal{O}(Pd^2)$. With $L_{\mathsf{T}}$ such layers, the total cost of temporal processing becomes $\mathcal{O}(L_{\mathsf{T}}Pd^2)$ per node. Since this is applied independently to all $N$ nodes across $L$ outer layers, the cumulative temporal cost is $\mathcal{O}(LL_{\mathsf{T}}NPd^2)$. Next, we consider the spatial processing in \autoref{eq:mp_disjoint}, where the $\mpl$ operator is composed of $L_{\mathsf{S}}$ message-passing layers, each defined as in \autoref{eq:mp_ref}. Assuming the message function $\phi^{(l)}$ involves a matrix multiplication, with complexity $\mathcal{O}(|\gE|d^2)$, where $|\gE|$ is the number of graph edges. Additionally, each layer applies two matrix-vector multiplications per node, one for $\Theta_{\mathsf{U}}^{(l)}$ and one for $\Theta_{\mathsf{M}}^{(l)}$. Hence, the cost of a single message-passing layer is $\mathcal{O}(|\gE|d^2 + 2Nd^2)$. 
Repeating this over $L_{\mathsf{S}}$ spatial layers, $T$ time steps, and $L$ outer layers results in a total spatial cost of $\mathcal{O}(LL_{\mathsf{S}}T(|\gE|d^2 + 2Nd^2))$.
Combining both components, the overall computational complexity of an \gls{mptcn} with number of layers $L$, $L_{\mathsf{T}}$, and $L_{\mathsf{S}}$, kernel size $P$, and hidden dimension $d$ is:
\[
\mathcal{O}\left(L\left(L_{\mathsf{T}}NPd^2 + L_{\mathsf{S}}T(|\gE|d^2 + 2Nd^2)\right)\right).
\]

\paragraph{Optimized approach.}
Encoder-decoder architectures, as the \glspl{stgnn} under study, typically require only the final representation $\vh_t^{v(L)}$ to produce the output $\hat{\vy}_t^v$. As a result, the last $\stmpl$ layer only requires performing message passing exclusively at the last time step $t$, using the embeddings $\mH_t^{(L)}$. This reduces the spatial computation at the final layer to
\(
\mathcal{O}\left(L_{\mathsf{S}}(|\gE|d^2 + 2Nd^2)\right),
\)
which is a factor of $T$ more efficient than in the naive approach.
Assuming that $PL_{\mathsf{T}} > T$, i.e., each $\stmpl$ layer has a temporal receptive field that spans the entire sequence, all preceding $\stmpl$ layers still require access to all $T$ time steps and therefore cannot be similarly optimized.
Under this assumption, the total complexity becomes:
\[
\mathcal{O}\left(LL_{\mathsf{T}}NPd^2 + \left((L-1)L_{\mathsf{S}}T + L_{\mathsf{S}}\right)(|\gE|d^2 + 2Nd^2)\right),
\]
and simplifies in the case of a \gls{tts} architecture with $L = 1$ to
\(
\mathcal{O}\left(L_{\mathsf{T}}NPd^2 + L_{\mathsf{S}}(|\gE|d^2 + 2Nd^2)\right).
\)
If we relax the assumption $PL_{\mathsf{T}} > T$, the \gls{tas} method still yields a computational gain over the naive implementation for the layers required to cover the full sequence length $T$. The remaining layers, however, incur the same cost as in the naive case. This results in a moderate speedup, albeit still significantly less efficient than the \gls{tts} approach.

\begin{table}[t]
  \centering
  \caption{Comparison of computational complexity between \gls{tts} and \gls{tas} under fixed spatial and temporal budgets ($B_S = LL_{\mathsf{S}}$, $B_T = LL_{\mathsf{T}}$), and fixed kernel size $P$, assuming that each layer's receptive field satisfies $PL_{\mathsf{T}} \geq T$. The \gls{tts} approach achieves a $T$-fold reduction in computation.}
    \label{tab:complex}
    \vskip 0.1in
  \begin{tabular}{lcc}
    \toprule
                                                                  & \textbf{\gls{tts} (L=1)} & \textbf{\gls{tas} (L>1)} \\
    \midrule
    \small
    \textbf{Naive}                                                &
    $\mathcal{O}\left(B_TNP + B_ST|\gE| \right)$ &
    $\mathcal{O}\left(B_TNP + B_ST|\gE| \right)$                                            \\
    \textbf{Optimized}                                            &
    $\mathcal{O}\left(B_TNP + B_S|\gE| \right)$         &
    $\mathcal{O}\left(B_TNP + L_{\mathsf{S}}|\gE| + (B_S - L_{\mathsf{S}})T|\gE| \right)$                                             \\
    \bottomrule
  \end{tabular}
\end{table}

\section{\textsc{TemporalNeighboursMatch}}
\label{app:neigh_match}
We propose \textsc{TemporalNeighboursMatch}, an adaptation of \textsc{NeighboursMatch} \citep{alon2021bottleneck} to the spatiotemporal setting. In \textsc{NeighboursMatch}, information is propagated from sender nodes to a root node, and the goal is to classify the root node with the label of the sender node with matching features. We extend this to a spatiotemporal setting with fixed graph topology in time and a single active time step, where only sender nodes receive non-zero features. This time step is an additional hyperparameter, akin to depth in \textsc{TreeNeighboursMatch}, where the graph topology is a tree. The goal in the \textsc{TemporalNeighboursMatch} problem remains to route the correct sender’s label~--~identified by matching features~--~to the root node at a later step. This task complements our existing synthetic benchmarks, as it emphasizes information compression, i.e., the need to retain and route specific input signals through bottlenecks.

For our experiment, we use a \textsc{TemporalTreeNeighboursMatch} controlled environment, in which the topology is a tree and is fixed, while the nonzero leaf features are placed either at the initial or the final time step.
In \autoref{tab:neigh_match}, we report test accuracy in the range $[0,1]$ for varying tree depths and a fixed number of time steps $T=4$.
Regarding spatial over-squashing, we observe the same pattern as in the original experiments of \citet{alon2021bottleneck}. Indeed, accuracy begins to drop at depth 4 and shows substantial degradation at depth 5. Notably, performance is higher at depth 4 when the relevant information appears at the start of the sequence rather than at the end, consistent with our theoretical and empirical findings.
However, the large drop in accuracy at depth 5 in both scenarios suggests that the main bottleneck is spatial rather than temporal. This indicates that the task, when combined with a tree-like topology, is limited in its ability to capture the nuances of the spatiotemporal bottleneck problem.

\begin{table}[h]
    \centering
    \vspace{-.5em}
    \caption{Test accuracy on \textsc{TemporalTreeNeighboursMatch} with varying tree depth and feature position.}
    \label{tab:neigh_match}
    \vskip 0.1in
    \setlength{\tabcolsep}{6pt}
    \setlength{\aboverulesep}{0pt}
    \setlength{\belowrulesep}{0pt}
    \renewcommand{\arraystretch}{1.2}
    \begin{tabular}{|c|ccc|}
        \cmidrule[.8pt]{2-4}
         \multicolumn{1}{c|}{} & \multicolumn{3}{c|}{\textsc{Tree Depth}}\\
        \toprule
         \textsc{Features Position} & 3 & 4 & 5 \\
        \toprule
         \textbf{First step} & 1.00 {\small $\pm$0.00} & 1.00 {\small $\pm$0.00} & \cellcolor{gray!18}{0.09 {\small $\pm$0.01}} \\
        \textbf{Last step} & 1.00 {\small $\pm$0.01} & \cellcolor{gray!8}{0.73 {\small $\pm$0.46}} & \cellcolor{gray!20}{0.11 {\small $\pm$0.08}} \\
        \bottomrule
    \end{tabular}
\end{table}


\end{document}